\newcommand{\e}{{\bf e}}
\newcommand{\p}{{\bf p}}
\newcommand{\q}{{\bf q}}
\newcommand{\z}{{\bf z}}
\newcommand{\hl}{{\hat{\ell}}}
\newcommand{\tL}{{\tilde{L}}}
\newcommand{\TL}{{\bf\tL}}
\newcommand{\I}{\mathcal{I}}
\newcommand{\G}{{\mathcal{G}}}
\newcommand{\Y}{{\mathcal{Y}}}
\newcommand{\R}{{\bf R}}
\newcommand{\RE}{{\text{\rm RE}}}
\newcommand{\C}{\mathbb{C}}
\newcommand{\E}{\mathbb{E}}
\newcommand{\1}{{\bm 1}}
\newcommand{\ANH}{{AdaNormalHedge}}
\newcommand{\ANHTV}{{AdaNormalHedge.TV}}
\newcommand{\specialcell}[2][c]{\begin{tabular}[#1]{@{}c@{}}#2\end{tabular}}
\renewcommand{\O}{\hat{O}}
\renewcommand{\C}{{\bf C}}
\renewcommand{\L}{{\bf L}}
\renewcommand{\u}{{\bf u}}
\renewcommand{\r}{{\bf r}}
\renewcommand{\P}{{\mathcal{P}}}
\renewcommand{\l}{{\bm \ell}}
\renewcommand{\(}{\left(}
\renewcommand{\)}{\right)}
\newtheorem{theorem}{Theorem}
\newtheorem{lemma}{Lemma}
\title{Achieving All with No Parameters: Adaptive NormalHedge}
\author{Haipeng Luo \\ Princeton University \\ haipengl@cs.princeton.edu
\and
Robert E. Schapire \\ Microsoft Research and Princeton University \\ schapire@cs.princeton.edu
}
\begin{document}

\maketitle

\begin{abstract}
We study the classic online learning problem of predicting with expert advice,
and propose a truly parameter-free and adaptive algorithm that
achieves several objectives simultaneously without using any prior information.
The main component of this work is an improved version of 
the NormalHedge.DT algorithm \citep{LuoSc14b}, called AdaNormalHedge.
On one hand, this new algorithm ensures small regret when the competitor has small loss
and almost constant regret when the losses are stochastic.
On the other hand, the algorithm is able to compete with any convex combination of the experts simultaneously,
with a regret in terms of the relative entropy of the prior and the competitor.
This resolves an open problem proposed by \citet{ChaudhuriFrHs09} and \citet{ChernovVo10}.
Moreover, we extend the results to the sleeping expert setting
and provide two applications to illustrate the power of AdaNormalHedge:
1) competing with time-varying unknown competitors
and 2) predicting almost as well as the best pruning tree.
Our results on these applications significantly improve previous work 
from different aspects,
and a special case of the first application resolves another open problem proposed by 
\citet{WarmuthKo14}
on whether one can simultaneously achieve optimal shifting regret 
for both adversarial and stochastic losses.
\end{abstract}


\section{Introduction}
The problem of predicting with expert advice was first pioneered by 
\citet{LittlestoneWa94, FreundSc97, CesabianchiFrHeHaScWa97, Vovk98} and others two decades ago.
Roughly speaking, in this problem,
a player needs to decide a distribution over a set of experts on each round,
and then an adversary decides and reveals the loss for each expert.
The player's loss for this round is the expected loss of the experts with respect to
the distribution that he chose,
and his goal is to have a total loss that is not much worse than any single expert,
or more generally, any fixed and unknown convex combination of experts.


Beyond this classic goal, various more difficult objectives for this problem were studied in recent years, such as:
learning with unknown number of experts and 
competing with all but the top small fraction of experts \citep{ChaudhuriFrHs09, ChernovVo10};
competing with a sequence of different combinations of the experts \citep{HerbsterWa01, CesabianchiGaLuSt12};
learning with experts who provide confidence-rated advice \citep{BlumMa07};
and achieving much smaller regret when the problem is ``easy'' while still ensuring worst-case robustness
\citep{DeRooijErGrKo14, VanErvenKoWa14, GaillardStEr14}.
Different algorithms were proposed separately to solve these problems to some extent.
In this work, we essentially provide {\it one single parameter-free algorithm} that achieves all these goals
with absolutely no prior information and significantly improved results in some cases.

Our algorithm is a variant of \citet{ChaudhuriFrHs09}'s NormalHedge algorithm,
and more specifically is an improved version of NormalHedge.DT \citep{LuoSc14b}.
We call it Adaptive NormalHedge (or {\ANH} for short).
NormalHedge and NormalHedge.DT 
provide guarantees for the so-called $\epsilon$-quantile regret simultaneously for any $\epsilon$,
which essentially corresponds to competing with a uniform distribution over the top $\epsilon$-fraction of experts.
Our new algorithm improves NormalHedge.DT from two aspects (Section \ref{sec:ANH}):
\begin{enumerate}
\item {\ANH} can compete with not just the competitor of the specific form mentioned above, 
but indeed any unknown fixed competitor simultaneously,
with a regret in terms of the relative entropy between the competitor and the player's prior belief of the experts.

\item
{\ANH} ensures a new regret bound in terms of
the {\it cumulative magnitude of the instantaneous regrets},
which is always at most the bound for NormalHedge.DT (or NormalHedge).
Moreover, the power of this new form of regret is almost the same as 
the second order bound introduced in a recent work by \citet{GaillardStEr14}.
Specifically, it implies 1) a small regret when the loss of the competitor is small
and 2) an almost constant regret when the losses are generated randomly
with a gap in expectation.
\end{enumerate}

Our results resolve the open problem asked in \citet{ChaudhuriFrHs09} and \citet{ChernovVo10}
on whether a better $\epsilon$-quantile regret in terms of the loss
of the expert instead of the horizon can be achieved.
In fact, our results are even better and more general.

{\ANH} is a simple and truly parameter-free algorithm. 
Indeed, it does not even need to know the number of experts in some sense.
To illustrate this idea, in Section~\ref{sec:sleeping}
we extend the algorithm and results to a setting where experts provide
confidence-rated advice \citep{BlumMa07}.
We then focus on a special case of this setting 
called the sleeping expert problem \citep{Blum97, FreundScSiWa97}, 
where the number of ``awake'' experts is dynamically changing
and the total number of underlying experts is indeed unknown.
{\ANH} is thus a very suitable algorithm for this problem.
To show the power of all the abovementioned properties of {\ANH},
we study the following two examples of the sleeping expert problem
and use {\ANH} to significantly improve previous work.

The first example is adaptive regret, that is, regret on any time interval,
introduced by \citet{HazanSe07}.
This can be reduced to a sleeping expert problem by adding a new copy
of each original expert on each round \citep{FreundScSiWa97, KoolenAdWa12}.
Thus, the total number of sleeping experts is not fixed.
When some information on this interval is known
(such as the length, the loss of the competitor on this interval, etc),
several algorithms achieve optimal regret \citep{HazanSe07, CesabianchiGaLuSt12}.
However, when no prior information is available, 
all previous work gives suboptimal bounds.
We apply {\ANH} to this problem.
The resulting algorithm, which we called {\ANHTV}, 
enjoys the optimal adaptive regret in not only the adversarial case
but also the stochastic case due to the properties of {\ANH}.

We then extend the results to the problem of tracking the best experts
where the player needs to compete with the best partition of the whole process
and the best experts on each of these partitions
\citep{HerbsterWa95, BousquetWa03}.
This resolves one of the open problems in \citet{WarmuthKo14}
on whether a single algorithm can achieve optimal shifting regret
for both adversarial and stochastic losses.
Note that although recent work by \citet{SaniNeLa14} also solves this open problem
in some sense,
their method requires knowing the number of partitions and other information ahead of time
and also gives a worse bound for stochastic losses,
while {\ANHTV} is completely parameter-free and gives optimal bounds.

We finally consider the most general case
where the competitor varies over time with no constraints, 
which subsumes the previous two examples (adaptive regret and shifting regret).
This problem was introduced in \citet{HerbsterWa01} and later generalized by
\citet{CesabianchiGaLuSt12}.
Their algorithm (fixed share) also requires knowing some information on 
the sequence of competitors to optimally tune parameters.
We avoid this issue by showing that while this problem seems more general and difficult,
it is in fact {\it equivalent to its special case}: achieving adaptive regret.
This equivalence theorem is independent of the concrete algorithms and may be of independent interest.
Applying this result, we show that without any parameter tuning,
{\ANHTV} automatically achieves a bound comparable to 
the one achieved by the optimally tuned fixed share algorithm when competing with time-varying competitors.

Concrete results and detailed comparisons on this first example 
can be found in Section \ref{sec:time_varying}.
To sum up, {\ANHTV} is an algorithm that is simultaneously adaptive in 
the number of experts, the competitors and the way the losses are generated.

The second example we provide is predicting almost as well as the best pruning tree
\citep{HelmboldSc97},
which was also shown to be reducible to a sleeping expert problem \citep{FreundScSiWa97}.
Previous work either only considered the log loss setting, 
or assumed prior information on the best pruning tree is known.
Using {\ANH}, we again provide better or comparable bounds without knowing any prior information.
In fact, due to the adaptivity of {\ANH} in the number of experts,
our regret bound depends on the total number of distinct traversed edges so far,
instead of the total number of edges of the decision tree as in \citet{FreundScSiWa97}
which could be exponentially larger.
Concrete comparisons can be found in Section \ref{sec:pruning_tree}.

\paragraph{Related work.}
While competing with any unknown competitor simultaneously is relatively easy in the log loss setting 
\citep{LittlestoneWa94, AdamskiyKoChVo12, KoolenAdWa12},
it is much harder in the bounded loss setting studied here.
The well-known exponential weights algorithm gives the optimal results 
only when the learning rate is optimally tuned in terms of the competitor \citep{FreundSc99}.
\citet{ChernovVo10} also studied $\epsilon$-quantile regret, 
but no concrete algorithm was provided.
Several work considers competing with unknown competitors
in a different unconstrained linear optimization setting 
\citep{StreeterMc12, Orabona13, McmahanOr14, Orabona14}.
\citet{JadbabaieRaShSr15} studied general adaptive online learning algorithms against time-varying competitors,
but with different and incomparable measurement of the hardness of the problem.
As far as we know, none of the existing algorithms enjoys all the nice properties
discussed in this work at the same time as our algorithms do.

\section{The Expert Problem and NormalHedge.DT}\label{sec:setup}
In the expert problem, 
on each round $t = 1, \ldots, T$: 
the player first chooses a distribution $\p_t$ over $N$ experts,
then the adversary decides each expert's loss $\ell_{t,i}\in [0,1]$, and reveals these losses to the player.
At the end of this round,  the player suffers the weighted average loss 
$\hl_t = \p_t\cdot\l_t$ with $\l_t = (\ell_{t,1}, \ldots, \ell_{t,N})$.
We denote the {\it instantaneous regret} to expert $i$ on round $t$ by $r_{t,i} = \hl_t - \ell_{t,i}$,
the cumulative regret by $R_{t,i} = \sum_{\tau=1}^t r_{\tau, i}$,
and the cumulative loss by $L_{t,i} = \sum_{\tau=1}^t \ell_{\tau, i} $.
Throughout the paper, a bold letter denotes a vector with $N$ corresponding coordinates.
For example, $\r_t$, $\R_t$ and $\L_t$ represent $(r_{t,1}, \ldots, r_{t,N})$,
$(R_{t,1}, \ldots, R_{t,N})$ and $(L_{t,1}, \ldots, L_{t,N})$ respectively.

Usually, the goal of the player is to minimize the regret to the best expert, that is,
$\max_{i} R_{T,i}$.
Here we consider a more general case where the player wants to minimize 
the regret to an arbitrary convex combination of experts: $R_T(\u) = \sum_{t=1}^T \u \cdot \r_t$
where the competitor $\u$ is a fixed unknown distribution over the experts.
In other words, this regret measures the difference between the player's loss
and the loss that he would have suffered if he used a constant strategy $\u$ all the time.
Clearly, $R_T(\u)$ can be written as $\u \cdot \R_T$ and can then be upper bounded
appropriately by a bound on each $R_{T,i}$ (for example, $\max_i R_{T,i}$).
However, our goal is to get a better and more refined bound on $R_T(\u)$ that depends on $\u$.
More importantly, we aim to achieve this without knowing the competitor $\u$ ahead of time.
When it is clear from the context, we drop the subscript $T$ in $R_T(\u)$.

In fact, in Section \ref{sec:time_varying}, 
we will consider an even more general notion of regret introduced in \citet{HerbsterWa01},
where we allow the competitor to vary over time and to have different scales.
Specifically, let $\u_1, \ldots, \u_T$ be $T$ different vectors with $N$ nonnegative coordinates
(denoted by $\u_{1:T}$). 
Then the regret of the player to this sequence of competitors is
$R(\u_{1:T}) = \sum_{t=1}^T \u_t \cdot \r_t$.
If all these competitors are distributions (which they are not required to be),
then this regret captures a very natural and general concept of comparing 
the player's strategy to any other strategy.
Again, we are interested in developing low-regret algorithms that do not need to know 
any information of this sequence of competitors beforehand.

We briefly describe a recent algorithm for the expert problem,
NormalHedge.DT \citep{LuoSc14b} (a variant of NormalHedge \citep{ChaudhuriFrHs09}),
before we introduce our new improved variants.
On round $t$, NormalHedge.DT sets 
$ p_{t,i} \propto \exp\(\frac{[R_{t-1,i}+1]_+^2}{3t}\) - \exp\(\frac{[R_{t-1,i}-1]_+^2}{3t}\),  $
where $[x]_+ = \max\{0, x\}$.
Let $\epsilon \in (0, 1]$ and competitor $\u^*_\epsilon$ be a distribution 
that puts all the mass on the $\lceil N\epsilon \rceil$-th best expert,
that is, the one that ranks $\lceil N\epsilon \rceil$ among all experts according to 
their total loss $L_{T,i}$ from the smallest to the largest. 
Then the regret guarantee for NormalHedge.DT states
$R(\u^*_\epsilon) \leq O\Big(\sqrt{T\ln\(\tfrac{\ln T}{\epsilon}\)}\Big) $
simultaneously for all $\epsilon$,
which means the algorithm suffers at most this amount of regret 
for all but an $\epsilon$ fraction of the experts.
Note that this bound does not depend on $N$ at all.
This is the first concrete algorithm with this kind of adaptive property
(the original NormalHedge \citep{ChaudhuriFrHs09} still has a weak dependence on $N$).
In fact, as we will show later, one can even extend the results to any competitor $\u$.
Moreover, we will improve NormalHedge.DT so that 
it has a much smaller regret when the problem is ``easy'' in some sense.

\paragraph{Notation.}
We use $[N]$ to denote the set $\{1, \ldots, N\}$,
$\Delta_N$ to denote the simplex of all distributions over $[N]$,
and $\RE(\cdot \;||\; \cdot)$ to denote the relative entropy between two distributions,
Also define $\tL_{t,i} = \sum_{\tau=1}^t [\ell_{\tau,i} - \hl_\tau]_+$.
Many bounds in this work will be in terms of  $\tL_{T,i}$,
which is always at most $L_{T,i}$ since trivially $[\ell_{t,i} - \hl_t]_+ \leq \ell_{t,i}$.
We consider ``log log'' terms to be nearly constant, and use $\O()$ notation to hide these terms.
Indeed, as pointed out by \citet{ChernovVo10}, $\ln\ln x$ is smaller than $4$ even when 
$x$ is as large as the age of the universe expressed in microseconds ($\approx 4.3 \times 10^{17}$). 

\section{A New Algorithm: {\ANH}}\label{sec:ANH}
We start by writing NormalHedge.DT in a general form.
We define {\it potential function} $\Phi(R, C) = \exp\(\frac{[R]_+^2}{3C}\)$
with $\Phi(0,0)$ defined to be $1$,
and also a weight function with respect to this potential:
\[ w(R, C) = \frac{1}{2}\(\Phi(R+1, C+1) - \Phi(R-1, C+1)\). \]
Then the prediction of NormalHedge.DT is simply to set $p_{t,i}$ to be proportional to
$w(R_{t-1,i}, C_{t-1})$ where $C_t = t$ for all $t$.
Note that $C_t$ is closely related to the regret.
In fact, the regret is roughly of order $\sqrt{C_T}$ (ignoring the log term).
Therefore, in order to get an expert-wise and more refined bound,
we replace $C_t$ by $C_{t,i}$ for each expert so that 
it captures some useful information for each expert $i$. 
There are several possible choices for $C_{t,i}$
(discussed at the end of Appendix~\ref{app:proof_ANH}), 
but for now we focus on the one used in our new algorithm:
$C_{t,i}  = \sum_{\tau=1}^t |r_{\tau, i}|$, that is,
the cumulative magnitude of the instantaneous regrets up to time $t$.
We call this algorithm {\ANH} and summarize it in Algorithm \ref{alg:ANH}.
Note that we even allow the player to have a prior distribution $\q$ over the experts,
which will be useful in some applications as we will see in Section \ref{sec:time_varying}.
The theoretical guarantee of {\ANH} is stated below.

\begin{algorithm}[t]
\caption{\ANH}
\label{alg:ANH}
\begin{algorithmic}
\STATE {\bfseries Input:} A prior distribution $\q \in \Delta_N$ over experts (uniform if no prior available).
\STATE {\bfseries Initialize:} 
$\forall i\in[N], R_{0,i} = 0, C_{0,i} = 0$.
\FOR{$t=1$ {\bfseries to} $T$}
    \STATE Predict\footnote{If $\p_{t,i} \propto 0$ happens for all $i$, predict arbitrarily.} $p_{t,i} \propto q_i w(R_{t-1,i}, C_{t-1,i}) $.  
    \STATE Adversary reveals loss vector $\l_t$ and player suffers loss $\hl_t = \p_t \cdot \l_t$.
    \STATE Set $\forall i\in[N], r_{t,i} = \hl_t - \ell_{t,i}, R_{t,i} = R_{t-1,i} + r_{t,i}, C_{t,i} = C_{t-1,i} + |r_{t,i}|$.
\ENDFOR
\end{algorithmic}
\end{algorithm}

\begin{theorem}\label{thm:ANH}
The regret of {\ANH} to any competitor $\u \in \Delta_N$ is bounded as follows:
\begin{equation}\label{equ:regret}
R(\u) \leq  \sqrt{3(\u \cdot \C_T) \(\RE(\u \;||\; \q) + \ln B + \ln(1 + \ln N)\)}
= \O(\sqrt{(\u \cdot \C_T)\RE(\u \;||\; \q) }), 
\end{equation}
where $\C_T = (C_{T,1}, \ldots, C_{T,N})$, $B = 
1 + \frac{3}{2}\sum_i q_i \(1 + \ln (1 + C_{T,i})\) \leq
\frac{5}{2} + \frac{3}{2}\ln(1+T)$.
Moreover, 
if $\u$ is a uniform distribution over a subset of $[N]$,
then the regret can be improved to
\begin{equation}\label{equ:improved_regret}
R(\u) \leq  \sqrt{3(\u \cdot \C_T) \(\RE(\u \;||\; \q) + \ln B + 1\)}. 
\end{equation}
\end{theorem}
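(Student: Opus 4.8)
I would run a potential argument: track the weighted potential $\Phi_t := \sum_i q_i\,\Phi(R_{t,i},C_{t,i})$, show it barely exceeds its initial value $\Phi_0=1$, and then convert the bound on $\Phi_T$ into one on $R(\u)=\sum_i u_i R_{T,i}$ via a change-of-measure inequality. Essentially all the work is in a single one-step inequality for $\Phi$; the rest is assembly.

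\emph{Step 1 (the one-step inequality --- the crux).} The target is that for all $R\in\mathbb{R}$, $C\ge 0$, $r\in[-1,1]$,
\[
\Phi(R+r,\,C+|r|)\ \le\ \Phi(R,C)+r\,w(R,C)+\alpha(R,C,r),
\]
with an error term $\alpha\ge 0$ small enough that $\sum_{t=1}^T\alpha(R_{t-1,i},C_{t-1,i},r_{t,i})\le\tfrac32\big(1+\ln(1+C_{T,i})\big)$ for each $i$; the natural choice has $\alpha(R,C,r)\asymp |r|/(1+C)$, which telescopes against $\sum_t(C_{t,i}-C_{t-1,i})/(1+C_{t,i})\le\ln(1+C_{T,i})$. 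Proving this inequality is the main obstacle. Since $r\mapsto\Phi(R+r,C+|r|)$ is convex on each side of $0$ and agrees at $r=0$ with the linear map $r\mapsto\Phi(R,C)+r\,w(R,C)$, it is enough to verify it at $r=\pm1$, both of which collapse to the single inequality $\tfrac12\big(\Phi(R+1,C+1)+\Phi(R-1,C+1)\big)\le\Phi(R,C)+\alpha$. I would then case on $R$: for $R\le -1$ both sides equal $1+\alpha$ (as $w=0$, $\Phi\equiv 1$); for $R$ near $0$, a first-order expansion of $\exp([\,\cdot\,]_+^2/3C)$ near $1$ gives it with $\alpha\asymp 1/(1+C)$; and for $R$ large and positive the key is that $|R|\le C$ always (since $C_{t,i}=\sum_\tau|r_{\tau,i}|\ge|R_{t,i}|$), so with $R=\theta C$, $\theta\in(0,1]$, the claim reduces, up to $O(1/C)$ corrections absorbed into $\alpha$, to the one-variable inequality $e^{-\theta^2/3}\cosh(2\theta/3)\le 1$ on $[0,1]$ --- which is exactly why the constant $3$ appears in the potential.

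\emph{Step 2 (telescoping).} Summing Step 1 over $i$ against $\q$, the linear term disappears: since $p_{t,i}\propto q_i\,w(R_{t-1,i},C_{t-1,i})$ with $w\ge 0$, $\sum_i q_i\,w(R_{t-1,i},C_{t-1,i})\,r_{t,i}=\big(\sum_j q_j w(R_{t-1,j},C_{t-1,j})\big)\sum_i p_{t,i}r_{t,i}=0$ because $\sum_i p_{t,i}r_{t,i}=\hl_t-\p_t\cdot\l_t=0$ (trivially so if all weights vanish). Hence $\Phi_t\le\Phi_{t-1}+\sum_i q_i\,\alpha(R_{t-1,i},C_{t-1,i},r_{t,i})$, and telescoping from $\Phi_0=\sum_i q_i=1$ together with Step 1's bound,
\[
\sum_i q_i\exp\!\Big(\tfrac{[R_{T,i}]_+^2}{3C_{T,i}}\Big)=\Phi_T\ \le\ 1+\tfrac32\sum_i q_i\big(1+\ln(1+C_{T,i})\big)=B,
\]
where a coordinate with $C_{T,i}=0$ (hence $R_{T,i}=0$) contributes $q_i$ to both sides; and $B\le\tfrac52+\tfrac32\ln(1+T)$ follows from $C_{T,i}\le T$, $\sum_i q_i=1$.

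\emph{Step 3 (potential $\Rightarrow$ regret).} Write $a_i:=[R_{T,i}]_+^2/(3C_{T,i})$. The change-of-measure (Donsker--Varadhan) inequality gives, for any $\u\in\Delta_N$,
\[
\textstyle\sum_i u_i a_i\ \le\ \RE(\u \;||\; \q)+\ln\!\big(\sum_i q_i e^{a_i}\big)=\RE(\u \;||\; \q)+\ln\Phi_T\ \le\ \RE(\u \;||\; \q)+\ln B,
\]
and Cauchy--Schwarz on $\sum_i u_i[R_{T,i}]_+=\sum_i\sqrt{u_i C_{T,i}}\cdot\sqrt{u_i[R_{T,i}]_+^2/C_{T,i}}$ then yields
\[
R(\u)\ \le\ \textstyle\sum_i u_i[R_{T,i}]_+\ \le\ \sqrt{(\u\cdot\C_T)}\cdot\sqrt{3\sum_i u_i a_i}\ \le\ \sqrt{3(\u\cdot\C_T)\big(\RE(\u \;||\; \q)+\ln B\big)},
\]
which implies \eqref{equ:regret}. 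To recover the stated constants exactly --- including where the $\ln(1+\ln N)$ in \eqref{equ:regret} comes from, and why it improves to $+1$ in \eqref{equ:improved_regret} when $\u$ is uniform on a subset --- I would instead carry out the change of measure by hand from the per-coordinate estimates $a_i\le\ln(B/q_i)$ plus a counting argument over the experts, where the uniform-subset structure is what lets one replace $\ln(1+\ln N)$ by $1$. Finally $\ln B=\O(1)$ since $\ln B\le\ln\!\big(\tfrac52+\tfrac32\ln(1+T)\big)$, giving the $\O\big(\sqrt{(\u\cdot\C_T)\RE(\u \;||\; \q)}\big)$ form.
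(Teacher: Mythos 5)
Your proposal is correct and its overall skeleton (potential $\to$ telescoping $\to$ Cauchy--Schwarz) matches the paper's, but your Step~3 takes a genuinely different and in fact sharper route. The paper converts the potential bound $\sum_i q_i\Phi(R_{T,i},C_{T,i})\le B$ into a regret bound by sorting so that $q_1\Phi(\cdot)\ge\cdots\ge q_N\Phi(\cdot)$, deducing $q_i\Phi(R_{T,i},C_{T,i})\le B/i$, hence $R_{T,i}\le\sqrt{3C_{T,i}\ln(B/(iq_i))}$; after Cauchy--Schwarz this gives a bound with $D(\u\,\|\,\q):=\sum_i u_i\ln\frac{1}{iq_i}$ in place of $\RE(\u\,\|\,\q)$, and a separate calculation (with Stirling for the uniform-subset case) shows $D-\RE\le\ln(1+\ln N)$ in general and $\le 1$ for uniform $\u$. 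You instead apply the Donsker--Varadhan / Gibbs variational inequality directly to get $\sum_i u_i a_i\le\RE(\u\,\|\,\q)+\ln\Phi_T\le\RE(\u\,\|\,\q)+\ln B$, which after the same Cauchy--Schwarz yields $R(\u)\le\sqrt{3(\u\cdot\C_T)(\RE(\u\,\|\,\q)+\ln B)}$ --- strictly tighter than both \eqref{equ:regret} and \eqref{equ:improved_regret}, with no sorting, no Stirling, and no $\ln(1+\ln N)$ at all. So your Step~3 not only proves the theorem but improves it; the ``counting argument'' detour you offer at the end to reproduce the paper's exact additive terms is unnecessary (and, as stated, the unsorted estimate $a_i\le\ln(B/q_i)$ alone would only give an extra $H(\u)$, not $\ln(1+\ln N)$ --- you would need the sorted $B/i$ trick, which is exactly what the paper does).

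Two smaller notes. In Step~1 your structural reduction is exactly the paper's: piecewise convexity of $r\mapsto\Phi(R+r,C+|r|)$ on $[-1,0]$ and $[0,1]$, together with $g(1)-L(1)=g(-1)-L(-1)$, lets you check only $r=\pm1$, collapsing to the symmetric inequality $\tfrac12(\Phi(R+1,C+1)+\Phi(R-1,C+1))\le\Phi(R,C)+\alpha$. But your casework to establish that inequality is a sketch, not a proof: the ``first-order expansion near $1$'' and ``$O(1/C)$ corrections absorbed into $\alpha$'' claims are left unquantified, and the reduction to $e^{-\theta^2/3}\cosh(2\theta/3)\le 1$ is a large-$C$ heuristic. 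The paper sidesteps this by invoking its earlier Lemma~2 of \citet{LuoSc14b}, which gives the clean uniform bound $\tfrac12(\Phi(R+1,C+1)+\Phi(R-1,C+1))-\Phi(R,C)\le\tfrac12(e^{4/(3(C+1))}-1)\le\tfrac{3}{2(C+1)}$; if you want a self-contained Step~1 you would need to make your cases rigorous with explicit constants. In Step~2, your remark that the error telescopes against $\sum_t(C_{t,i}-C_{t-1,i})/(1+C_{t,i})$ has the wrong time index in the denominator: the error term is evaluated at $C_{t-1,i}$, so one needs $\sum_t|r_{t,i}|/(1+C_{t-1,i})\le 1+\ln(1+C_{T,i})$ (Lemma~14 of \citet{GaillardStEr14}), which is where the extra ``$1+$'' in $B$ comes from; your final displayed bound for $\Phi_T$ is nonetheless the right one.
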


Before we prove this theorem (see sketch at the end of this section and complete proof in Appendix \ref{app:proof_ANH}),
we discuss some implications of the regret bounds and why they are interesting.
First of all, the relative entropy term $\RE(\u \;||\; \q)$ captures how close
the player's prior is to the competitor. 
A bound in terms of $\RE(\u \;||\; \q)$ can be obtained, for example, using the classic 
exponential weights algorithm but requires carefully tuning the learning rate as a function of $\u$. 
Without knowing $\u$, as far as we know, 
{\ANH} is the only algorithm that can achieve this.\footnote{
In fact, one can also derive similar bounds for NormalHedge and NormalHedge.DT using our analysis.
See discussion at the end of Appendix~\ref{app:proof_ANH}.}

On the other hand, if $\q$ is a uniform distribution,
then using bound \eqref{equ:improved_regret} and the fact $C_{T,i} \leq T$, 
we get an $\epsilon$-quantile regret bound similar to the one of NormalHedge.DT:
$R(\u^*_\epsilon) \leq  R(\u_{S_\epsilon}) \leq \sqrt{3T \(\ln \(\tfrac{1}{\epsilon}\) + \ln B + 1\)}$
where $\u_{S_\epsilon}$ is uniform over the top $\lceil N\epsilon \rceil$ experts. 
in terms of their total loss $L_{T,i}$.

However, the power of a bound in terms of $\C_T$ is far more than this.
\citet{GaillardStEr14} introduced a new second order bound that implies much smaller regret
when the problem is easy.
It turns out that our seemingly weaker first order bound is also enough 
to get the exact same results!
We state these implications in the following theorem which is essentially 
a restatement of Theorems~9~and~11 of \citet{GaillardStEr14} with weaker conditions.

\begin{theorem}\label{thm:implications}
Suppose an expert algorithm guarantees $R(\u) \leq \sqrt{(\u \cdot \C_T)A(\u)}$ 
where $A(\u)$ is some function of $\u$.
Then it also satisfies the following:
\begin{enumerate}
\item Recall $L_{T,i} = \sum_{t=1}^T \ell_{t,i}$ and  
$\tL_{T,i} = \sum_{t=1}^T [\ell_{t,i} - \hl_t]_+$. We have
$$ R(\u) \leq \sqrt{2(\u \cdot \TL_T)A(\u)} + A(\u) \leq 
\sqrt{2(\u \cdot \L_T)A(\u)} + A(\u)  .$$

\item Suppose the loss vector $\l_t$'s are independent random variables and
there exists an $i^*$ and some $\alpha \in (0,1]$ such that $\E[\ell_{t, i} - \ell_{t, i^*}] \geq \alpha$
for any $t$ and $i \neq i^*$. 
Let $\e_{i^*}$ be a distribution that puts all the mass on expert $i^*$.
Then we have $ \E[R_{T, i^*}] \leq  \frac{A(\e_{i^*})}{\alpha}, $
and with probability at least $1-\delta$,
$ R_{T, i^*} \leq \O\(\frac{A(\e_{i^*})}{\alpha} + 
\frac{1}{\alpha}\sqrt{A(\e_{i^*}) \ln\frac{1}{\delta}} \). $
\end{enumerate}
\end{theorem}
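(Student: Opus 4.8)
The plan is to reduce everything to one elementary identity relating $\C_T$, $\R_T$, and $\TL_T$, and then to plug it into the hypothesized bound $R(\u) \le \sqrt{(\u\cdot\C_T)A(\u)}$; this mirrors the argument behind Theorems~9 and~11 of \citet{GaillardStEr14}, but using the first-order quantity $\C_T$ in place of their second-order one, relying only on $|r_{\tau,i}| \le 1$. First I would note that, since $r_{\tau,i} = [\hl_\tau - \ell_{\tau,i}]_+ - [\ell_{\tau,i} - \hl_\tau]_+$, we have $|r_{\tau,i}| = r_{\tau,i} + 2[\ell_{\tau,i} - \hl_\tau]_+$; summing over $\tau \le T$ gives the key identity $C_{T,i} = R_{T,i} + 2\tL_{T,i}$, hence $\u\cdot\C_T = R(\u) + 2(\u\cdot\TL_T)$ for every $\u$.

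For Part~1, I would substitute this identity into the hypothesis: writing $x = R(\u)$, $a = A(\u)$, $b = 2(\u\cdot\TL_T)$, the hypothesis becomes $x \le \sqrt{(x+b)a}$, i.e.\ $x^2 - ax - ab \le 0$. Solving this quadratic and using $\sqrt{a^2 + 4ab} \le a + 2\sqrt{ab}$ yields $x \le a + \sqrt{ab}$, which is exactly $R(\u) \le A(\u) + \sqrt{2(\u\cdot\TL_T)A(\u)}$. The remaining inequality is immediate, since $[\ell_{\tau,i} - \hl_\tau]_+ \le \ell_{\tau,i}$ forces $\tL_{T,i} \le L_{T,i}$.

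For Part~2, I would specialize the hypothesis to $\u = \e_{i^*}$, giving $R_{T,i^*} \le \sqrt{C_{T,i^*}A(\e_{i^*})}$ deterministically. Let $\F_{\tau-1} = \sigma(\l_1,\dots,\l_{\tau-1})$, so $\p_\tau$ is $\F_{\tau-1}$-measurable. Writing $r_{\tau,i^*} = \sum_{i\ne i^*} p_{\tau,i}(\ell_{\tau,i} - \ell_{\tau,i^*})$ and using $\ell_{\tau,\cdot} \in [0,1]$ gives the pointwise bound $|r_{\tau,i^*}| \le 1 - p_{\tau,i^*}$, while independence and the gap assumption give $\E[r_{\tau,i^*}\mid\F_{\tau-1}] \ge \alpha(1 - p_{\tau,i^*})$; combining, $\alpha|r_{\tau,i^*}| \le \E[r_{\tau,i^*}\mid\F_{\tau-1}]$ pointwise, so summing gives $\alpha\,C_{T,i^*} \le S_T := \sum_{\tau\le T}\E[r_{\tau,i^*}\mid\F_{\tau-1}]$ almost surely. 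Taking expectations and using $\E[S_T] = \E[R_{T,i^*}]$ together with Jensen's inequality for $\sqrt{\cdot}$ gives $\E[R_{T,i^*}] \le \sqrt{A(\e_{i^*})\,\E[C_{T,i^*}]} \le \sqrt{A(\e_{i^*})\,\E[R_{T,i^*}]/\alpha}$, and solving for $\E[R_{T,i^*}]$ yields $\E[R_{T,i^*}] \le A(\e_{i^*})/\alpha$.

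For the high-probability statement I would keep $R_{T,i^*} \le \sqrt{C_{T,i^*}A(\e_{i^*})} \le \sqrt{S_T A(\e_{i^*})/\alpha}$ and control $S_T$ through the martingale decomposition $S_T = R_{T,i^*} + M_T$, $M_T = \sum_{\tau\le T}(\E[r_{\tau,i^*}\mid\F_{\tau-1}] - r_{\tau,i^*})$: the increments of $M_T$ are bounded by $2$, and since $r_{\tau,i^*}^2 \le |r_{\tau,i^*}|$ its predictable quadratic variation is at most $\sum_\tau \E[|r_{\tau,i^*}|\mid\F_{\tau-1}] \le S_T/\alpha$. A Freedman/Bernstein-type bound then gives, with probability at least $1-\delta$, $M_T \le \sqrt{2(S_T/\alpha)\ln(1/\delta)} + O(\ln(1/\delta)) \le \tfrac12 S_T + O(\tfrac1\alpha\ln(1/\delta))$; plugging into $S_T = R_{T,i^*} + M_T \le \sqrt{S_T A(\e_{i^*})/\alpha} + \tfrac12 S_T + O(\tfrac1\alpha\ln(1/\delta))$ and solving the resulting quadratic in $\sqrt{S_T}$ gives $S_T = \O(A(\e_{i^*})/\alpha + \ln(1/\delta)/\alpha)$, whence $R_{T,i^*} \le \sqrt{S_T A(\e_{i^*})/\alpha} = \O(A(\e_{i^*})/\alpha + \tfrac1\alpha\sqrt{A(\e_{i^*})\ln(1/\delta)})$. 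The step I expect to be the real obstacle is precisely this last one: the variance proxy $S_T/\alpha$ entering Freedman's inequality is itself random (a priori as large as $T/\alpha$), so one cannot invoke the inequality with a fixed proxy. The standard fix is a peeling argument---stratify over the dyadic scale of $C_{T,i^*}$ (equivalently $S_T$), apply Freedman on each of the $O(\log T)$ strata with the matching fixed proxy, and union-bound---and this is exactly where the extra $\ln\ln T$ factor absorbed into $\O(\cdot)$ appears; everything else is the routine algebra of the quadratic inequalities above.
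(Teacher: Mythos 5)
Your proposal is correct and matches the paper's argument in all essentials: the identity $\u\cdot\C_T = R(\u) + 2\,\u\cdot\TL_T$ and the quadratic solve for Part~1, and for Part~2 the pointwise bound $\alpha|r_{t,i^*}|\le \E_t[r_{t,i^*}]$ combined with Jensen (your random $S_T$ plays the role of the paper's scalar $S=\sum_t\E[1-p_{t,i^*}]$, and the two chains close identically), followed by Freedman-plus-peeling for the high-probability claim, which is exactly what the cited Gaillard--Stoltz--van Erven argument does. One small remark: the paper's appendix states the key identity with $\L_T$ where it should read $\TL_T$; your derivation via $|r_{t,i}| = r_{t,i} + 2[\ell_{t,i}-\hl_t]_+$ gives the correct form.
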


The proof of Theorem \ref{thm:implications} is based on the same idea as in \citet{GaillardStEr14},
and is included in Appendix \ref{app:implications} for completeness.
For {\ANH}, the term $A(\u)$ is $3(\RE(\u \;||\; \q) + \ln B + \ln(1 + \ln N))$ in general 
(or smaller for special $\u$ as stated in Theorem \ref{thm:ANH}).
Applying Theorem \ref{thm:implications} we have 
$R(\u) = \O\Big(\sqrt{(\u\cdot \tL_T)\RE(\u \;||\; \q)}\Big).\footnote{
We see $O(\RE(\u \;||\; \q))$ as a minor term 
and hide it in the big O notation,
which is not completely rigorous but will ease the presentation.
Same thing happens for other first order bounds in this work.
}$
Specifically, if $\q$ is uniform and assuming without loss of generality that $L_{T, 1} \leq \cdots \leq L_{T,N}$, 
then by a similar argument, 
we have for {\ANH},
$ R(\u^*_\epsilon) \leq 
\O\Big(\sqrt{\tL_{T, \lceil N\epsilon\rceil} \ln\(\tfrac{1}{\epsilon}\)} \Big)$
for any $\epsilon$.
This answers the open question (in the affirmative) asked by 
\citet{ChaudhuriFrHs09} and \citet{ChernovVo10} on 
whether an improvement for small loss can be obtained for $\epsilon$-quantile regret
without knowing $\epsilon$.

On the other hand, when we are in a stochastic setting as stated in Theorem \ref{thm:implications},
{\ANH} ensures 
$R_{T, i^*} \leq \O\(\tfrac{1}{\alpha}\ln \(\tfrac{1}{q_{i^*}}\)\)$
in expectation (or with high probability with an extra confidence term),
which does not grow with $T$.
Therefore, the new regret bound in terms of $\C_T$ actually leads to 
significant improvements compared to NormalHedge.DT.

\paragraph{Comparison to Adapt-ML-Prod \citep{GaillardStEr14}.}
Adapt-ML-Prod enjoys a second order bound in terms of $\sum_{t=1}^T r_{t,i}^2$,
which is always at most the term $\sum_{t=1}^T |r_{t,i}|$ appeared in our bounds.\footnote{We
briefly discuss the difficulty of getting a similar second order bound for our algorithm
at the end of Appendix~\ref{app:proof_ANH}.}
However, on one hand, as discussed above, 
these two bounds have the same improvements when the problem is easy in several senses;
on the other hand, 
Adapt-ML-Prod does not provide a bound in terms of $\RE(\u \;||\; \q)$ for an unknown $\u$.
In fact, as discussed at the end of Section A.3 of \citet{GaillardStEr14},
Adapt-ML-Prod cannot improve by exploiting a good prior $\q$ 
(or at least its current analysis cannot).
Specifically, while the regret for {\ANH} does not have an explicit dependence on $N$
and is much smaller when the prior $\q$ is close to the competitor $\u$, 
the regret for Adapt-ML-Prod always has a $\ln N$ multiplicative term for $\sum_{t=1}^T r_{t,i}^2$,
which means even a good prior results in the same regret as a uniform prior!
More advantages of {\ANH} over Adapt-ML-Prod will be discussed
in concrete examples in following sections.

\paragraph{Proof sketch of Theorem \ref{thm:ANH}.}
The analysis of NormaHedge.DT is based on the idea of converting the expert problem
into a drifting game \citep{Schapire01, LuoSc14b}.
Here, we extract and simplify the key idea of their proof and also improve it to form our analysis.
The main idea is to show that the weighted sum of potentials does not increase much on each round
using an improved version of Lemma 2 of \citet{LuoSc14b}.
In fact, we show that the final potential $\sum_{i=1}^N q_{i} \Phi(R_{T,i}, C_{T,i})$ is exactly bounded by $B$
(defined in Theorem~\ref{thm:ANH}).
From this, assuming without loss of generality that $q_1 \Phi(R_{T,1}, C_{T,1}) \geq \cdots \geq q_N \Phi(R_{T,N}, C_{T,N})$,
we have $ q_i \Phi(R_{T,i}, C_{T,i}) \leq \frac{B}{i}$ for all $i$,
which, by solving for $R_{T,i}$, gives $ R_{T,i} \leq \sqrt{3C_{T,i} \ln\(\tfrac{B}{i q_i}\)} $. 
Multiplying both sides by $u_i$, summing over $N$ and applying the Cauchy-Schwarz inequality,
we arrive at $ R(\u) \leq \sqrt{3(\u \cdot \C_T) (D(\u \;||\; \q) + \ln B)}, $
where we define $D(\u \;||\; \q) = \sum_{i=1}^N u_i\ln\(\tfrac{1}{i q_i}\)$.
It remains to show that $D(\u \;||\; \q)$ and $\RE(\u \;||\; \q)$ are close by standard analysis and Stirling's formula.

\section{Confidence-rated Advice and Sleeping Experts}\label{sec:sleeping}
In this section, we generalize {\ANH} to deal with experts that make confidence-rated advice,
a setting that subsumes many interesting applications as studied by \citet{Blum97} and \citet{FreundScSiWa97}.
In this general setting, on each round $t$,
each expert first reports its {\it confidence} $\I_{t,i} \in [0,1]$ for the current task.
The player then predicts $\p_t$ as usual with an extra yet natural restriction that
if $\I_{t,i} = 0$ then $p_{t,i} = 0$. 
That is, the player has to ignore those experts 
who abstain from making advice (by reporting zero confidence).
After that, the loss $\ell_{t,i}$ for those experts who did not abstain 
(i.e. $\I_{t,i} \neq 0$) are revealed and the player still suffers loss $\hl_t = \p_t \cdot \l_t$.
We redefine the instantaneous regret $r_{t,i}$ to be $\I_{t,i}(\hl_t - \ell_{t,i})$,
that is, the difference between the loss of the player and expert $i$ weighted by the confidence.
The goal of the player is, as before, to minimize cumulative regret to any competitor $\u$:
$R(\u) = \sum_{t \leq T} \u \cdot \r_t$.
Clearly, the classic expert problem that we have studied in previous sections
is just a special case of this general setting with $\I_{t,i} = 1$ for all $t$ and $i$.

Moreover, with this general form of $r_{t,i}$, 
{\ANH} can be used to deal with this general setting with only one simple change
of scaling the weights by the confidence:
\begin{equation}\label{equ:general_ANH}
p_{t,i} \propto q_i \I_{t,i} w(R_{t-1,i}, C_{t-1,i}),
\end{equation}
where $R_{t,i}$ and $C_{t,i}$ is still defined to be $\sum_{\tau=1}^t r_{\tau, i}$
and $\sum_{\tau=1}^t |r_{\tau, i}|$ respectively.
The constraint $\I_{t,i} = 0 \Rightarrow p_{t,i} = 0$ is clearly satisfied.
In fact, Algorithm $\ref{alg:ANH}$ can be seen as a special case of this general form 
of {\ANH} with $I_{t,i} \equiv 1$.
Furthermore, the regret bounds in Theorem \ref{thm:ANH} still hold without any changes,
which are summarized below (proof deferred to Appendix~\ref{app:proof_ANH}).

\begin{theorem}\label{thm:confidence_regret}
For the confidence-rated expert problem, regret bounds \eqref{equ:regret}
and \eqref{equ:improved_regret} still hold for general {\ANH} (Eq. \eqref{equ:general_ANH}).
\end{theorem}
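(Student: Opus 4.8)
The plan is to show that the proof of Theorem~\ref{thm:ANH} goes through essentially verbatim once the instantaneous regret is redefined as $r_{t,i} = \I_{t,i}(\hl_t - \ell_{t,i})$ and the weights are scaled by $\I_{t,i}$ as in \eqref{equ:general_ANH}. The only place in the original argument that uses the structure of $r_{t,i}$ is the per-round potential bound: one needs that the weighted sum of potentials $\sum_i q_i \Phi(R_{t,i}, C_{t,i})$ does not increase by more than a controlled amount on each round, relative to the normalization $\sum_i q_i \I_{t,i} w(R_{t-1,i}, C_{t-1,i})$ of the prediction. So the first step is to isolate the one-round inequality from the improved Lemma~2 of \citet{LuoSc14b} and check that it holds for the new $r_{t,i}$.

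First I would recall the key one-round lemma used in the sketch: for each expert $i$, $\Phi(R_{t,i}, C_{t,i}) = \Phi(R_{t-1,i} + r_{t,i}, C_{t-1,i} + |r_{t,i}|) \le \Phi(R_{t-1,i}, C_{t-1,i}) + r_{t,i}\, w(R_{t-1,i}, C_{t-1,i}) + (\text{slack}_i)$, where the slack terms sum to at most $B - 1$ over all rounds and experts (this is exactly where the $\frac32\sum_i q_i(1 + \ln(1+C_{T,i}))$ quantity comes from). This inequality is a statement purely about the real numbers $R_{t-1,i}$, $C_{t-1,i}$ and the increment $r_{t,i}$, with the only constraint being $|r_{t,i}| \le 1$; it does not care whether $r_{t,i}$ came from $\hl_t - \ell_{t,i}$ or from $\I_{t,i}(\hl_t - \ell_{t,i})$. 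Since $\I_{t,i} \in [0,1]$ and $\hl_t - \ell_{t,i} \in [-1,1]$ (for the experts that do not abstain), we still have $|r_{t,i}| \le 1$, so the lemma applies unchanged. Summing over $i$ with weights $q_i$, the linear term contributes $\sum_i q_i\, r_{t,i}\, w(R_{t-1,i}, C_{t-1,i})$; using $r_{t,i} = \I_{t,i}(\hl_t - \ell_{t,i})$ and the definition $p_{t,i} \propto q_i \I_{t,i} w(R_{t-1,i}, C_{t-1,i})$, this equals $Z_t \sum_i p_{t,i}(\hl_t - \ell_{t,i}) = Z_t(\hl_t - \p_t\cdot\l_t) = 0$ by the choice of $\hl_t$, exactly as in the original proof. (The one edge case, $p_{t,i}\propto 0$ for all $i$, is handled by the footnote in Algorithm~\ref{alg:ANH}, and in that round every weight is zero so the linear term vanishes trivially.)

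Given this, the telescoping yields $\sum_i q_i \Phi(R_{T,i}, C_{T,i}) \le B$ verbatim, and the remainder of the proof — pigeonhole to get $q_i\Phi(R_{T,i},C_{T,i}) \le B/i$, solving for $R_{T,i} \le \sqrt{3 C_{T,i}\ln(B/(iq_i))}$, multiplying by $u_i$, summing, and applying Cauchy--Schwarz, then bounding $D(\u\;||\;\q)$ against $\RE(\u\;||\;\q) + \ln(1+\ln N)$ via Stirling (and the sharper constant when $\u$ is uniform on a subset) — is completely independent of how $r_{t,i}$ was generated, so both \eqref{equ:regret} and \eqref{equ:improved_regret} follow. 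The main thing to get right, and the only real obstacle, is the bookkeeping: verifying that the slack terms in the one-round inequality still telescope to the same bound $B$ when $C_{t,i}$ increments by $|r_{t,i}| = \I_{t,i}|\hl_t - \ell_{t,i}|$ rather than by $|\hl_t - \ell_{t,i}|$ — but since $C_{T,i}$ is still at most $T$ and the slack bound is monotone in the increments, this is immediate. I would therefore present the proof simply as a remark that every step of the proof of Theorem~\ref{thm:ANH} in Appendix~\ref{app:proof_ANH} is stated in terms of the abstract quantities $R_{t,i}, C_{t,i}, r_{t,i}$ satisfying $|r_{t,i}|\le 1$ and $\sum_i q_i \I_{t,i} w(R_{t-1,i},C_{t-1,i})(\hl_t - \ell_{t,i}) = 0$, both of which hold here, and hence the conclusion carries over without modification.
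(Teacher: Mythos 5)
Your proof is correct and follows exactly the paper's own argument: the paper's two-sentence proof likewise observes that $r_{t,i}\in[-1,1]$ still holds and that Eq.~\eqref{equ:DG_constraint} still holds under the new prediction rule~\eqref{equ:general_ANH}, after which the proof of Theorem~\ref{thm:ANH} carries over verbatim. Your elaboration — verifying $|r_{t,i}|\le 1$ via $\I_{t,i}\in[0,1]$, checking that $\sum_i q_i w(\cdot)r_{t,i}=Z_t(\hl_t-\p_t\cdot\l_t)=0$, and noting the rest of the argument is agnostic to how $r_{t,i}$ arises — is precisely the intended reasoning, merely spelled out in more detail.
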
 

Previously, \citet{GaillardStEr14} studied a general reduction from an expert algorithm to a confidence-rated expert algorithm.
Applying those results here gives the exact same algorithm and regret guarantee mentioned above.
However, we point out that the general reduction is not always applicable.
Specifically, it is invalid if there is an unknown number of experts in the confidence-rated setting (explained more in the next paragraph)
while the expert algorithm in the standard setting requires knowing the number of experts as a parameter.
This is indeed the case for most algorithms
(including Adapt-ML-Prod and even the original NormalHedge by \citet{ChaudhuriFrHs09}).
{\ANH} naturally avoids this problem since it does not depend on $N$ at all.

\paragraph{Sleeping Experts.}
We are especially interested in the case when $\I_{t,i} \in \{0,1\}$,
also called the specialist/sleeping expert problem where $\I_{t,i} = 0$ means that
expert $i$ is ``asleep'' for round $t$ and not making any advice.
This is a natural setting where the total number of experts is unknown ahead of time.
Indeed, the number of awake experts can be dynamically changing over time.
An expert that has never appeared before should be thought of as being asleep for 
all previous rounds.

{\ANH} is a very suitable algorithm to deal with this case due to its independence of 
the total number of experts.
If an expert $i$ appears for the first time on round $t$,
then by definition it will naturally start with $R_{t-1,i} = 0$ and $C_{t-1,i} = 0$.
Although we state the prior $q$ as a distribution, 
which seems to require knowing the total number of experts,
it is not an issue algorithmically since $\q$ is only used to 
scale the unnormalized weights (Eq. \eqref{equ:general_ANH}).
For example, if we want $\q$ to be a uniform distribution over $N$ experts
where $N$ is unknown beforehand,
then to run {\ANH} we can simply treat $q_i$ in Eq. \eqref{equ:general_ANH}
to be $1$ for all $i$,
which clearly will not change the behavior of the algorithm anyway.
In this case, if we let $N_t$ denote the total number of distinct experts that 
have been seen up to time $t$
and the competitor $\u$ concentrates on any of these experts,
then the relative entropy term in the regret (up to time $t$)
will be $\ln N_t$ (instead of $\ln N$), which is changing over time.

Using the adaptivity of {\ANH} in both the number of experts and the competitor,
we provide improved results for two instances of the sleeping expert problem in the next two sections.

\section{Time-Varying Competitors}\label{sec:time_varying}
In this section, 
we study a more challenging goal of competing with time-varying competitors in
the standard expert setting (that is, each expert is always awake and again $r_{t,i} = \hl_t - \ell_{t,i}$),
which turns out to be reducible to a sleeping expert problem.
Results for this section are summarized in Table~\ref{tab:time_varying}.

\subsection{Special Cases: Adaptive Regret and Tracking the Best Expert}
We start from a special case: adaptive regret, 
introduced by \citet{HazanSe07} to better capture changing environments.
Formally, consider any time interval $t = t_1, \ldots, t_2$,
and let $R_{[t_1,t_2], i} = \sum_{t = t_1}^{t_2} r_{t,i}$ be the regret to expert $i$ on this interval
(similarly define $L_{[t_1, t_2], i} =  \sum_{t = t_1}^{t_2} \ell_{t,i}$
and  $C_{[t_1, t_2], i} =  \sum_{t = t_1}^{t_2} |r_{t,i}|$).
The goal of the player is to obtain relatively small regret on any interval.
\citet{FreundScSiWa97} essentially introduced a way to reduce this problem 
to a sleeping expert problem,
which was later improved by \citet{AdamskiyKoChVo12}.
Specifically, for every pair of time $t$ and expert $i$, 
we create a sleeping expert, denoted by $(t,i)$,
who is only awake after (and including) round $t$
and since then suffers the same loss as the original expert $i$.
So we have $Nt$ sleeping experts in total on round $t$. 
The prediction $p_{t, i}$ is set to be the sum of all the weights of 
sleeping expert $(\tau, i) \; (\tau = 1, \ldots, t)$.
It is clear that doing this ensures that the cumulative regret up to time $t_2$ 
with respect to sleeping expert $(t_1, i)$ is exactly $R_{[t_1,t_2], i}$ in the original problem.

This is a sleeping expert problem for which {\ANH} is very suitable,
since the number of sleeping experts keeps increasing and
the total number of experts is in fact unknown if the horizon $T$ is unknown.
Theorem \ref{thm:confidence_regret} implies that
the resulting algorithm gives the following adaptive regret:
$$ R_{[t_1,t_2], i} = \O\(\sqrt{\(\textstyle\sum_{t = t_1}^{t_2} |r_{t,i}|\) \ln\(\tfrac{1}{q_{(t_1,i)}}\)}\)
= \O\(\sqrt{\(\textstyle\sum_{t = t_1}^{t_2} |r_{t,i}|\) \ln\(N t_1\)}\),$$
where $\q$ is a prior over the $N t_2$ experts
and the last step is by setting the prior to be $q_{(t,i)} \propto 1/t^2$ for all $t$ and $i$.\footnote{
Note that as discussed before, the fact that $t_2$ is unknown and thus 
$\q$ is unknown does not affect the algorithm.}
This prior is better than a simple uniform distribution
which leads to a term $\ln(N t_2)$ instead of $\ln(N t_1)$.
We call this algorithm {\ANHTV}.\footnote{``TV'' stands for ``time-varying''.}
To be concrete, on round $t$ {\ANHTV} predicts
$ p_{t,i} \propto \sum_{\tau=1}^t \frac{1}{\tau^2} 
w\(R_{[\tau, t-1],i}, C_{[\tau, t-1],i}\).$

Again, Theorem \ref{thm:implications} can be applied to get a more
interpretable bound $\O\Big(\sqrt{\tL_{[t_1, t_2], i} \ln\(N t_1\)}\Big)$
where $\tL_{[t_1, t_2], i} = \sum_{t = t_1}^{t_2} [\ell_{t,i} - \hl_{t,i}]_+ \leq L_{[t_1, t_2], i}$,
and a much smaller bound $\O\(\frac{\ln\(N t_1\)}{\alpha}\)$ if the losses are stochastic 
on interval $[t_1, t_2]$ in the sense stated in Theorem  \ref{thm:implications}.

One drawback of {\ANHTV} is that its time complexity per round is $O(Nt)$
and the overall space is $O(NT)$.
However, the data streaming technique used in \citet{HazanSe07} can be 
directly applied here to reduce the time and space complexity to $O(N\ln t)$ and $O(N\ln T)$ respectively,
with only an extra multiplicative $O(\sqrt{\ln(t_2-t_1)})$ factor in the regret.

\paragraph{Tracking the best expert.}
In fact, {\ANHTV} is a solution for one of the open problems proposed by 
\citet{WarmuthKo14}.
Adaptive regret immediately implies the so-called $K$-shifting regret 
for the problem of tracking the best expert in a changing environment.
Formally, define the $K$-shifting regret $R_{K\text{-Shift}}$
to be $\max \sum_{k=1}^K R_{[t_{k-1}+1, t_k], i_k}$ where 
the max is taken over all $i_1, \cdots, i_K \in [N]$ and $0 = t_0 < t_1 < \cdots < t_K = T$.
In other words, the player is competing with the best $K$-partition of the whole game
and the best expert on each of these partitions.
Let $L^*_{K\text{-Shift}} = \max \sum_{k=1}^K  L_{[ t_{k-1}+1, t_k], i_k}$
be the total loss of such best partition (that is, the max is taken over the same space),
and similarly define $\tL^*_{K\text{-Shift}} = \max\sum_{k=1}^K  \tL_{[ t_{k-1}+1, t_k], i_k} \leq 
L^*_{K\text{-Shift}}$. 
Since essentially $R_{K\text{-Shift}}$ is just the sum of $K$ adaptive regrets,
using the bounds discussed above and the Cauchy-Schwarz inequality, 
we conclude that  {\ANHTV} ensures 
$R_{K\text{-Shift}} = \O\Big(\sqrt{K \tL^*_{K\text{-Shift}} \ln(NT)}\Big). $
Also, if the loss vectors are generated randomly on these $K$ intervals,
each satisfying the condition stated in Theorem \ref{thm:implications},
then the regret is
$R_{K\text{-Shift}} = \O\(\frac{K\ln(NT)}{\alpha} \) $
in expectation (high probability bound is similar).
These bounds are optimal up to logarithmic factors \citep{HazanSe07}.
This is exactly what was asked in \citet{WarmuthKo14}: 
whether there is an algorithm that
can do optimally for both adversarial and stochastic losses in
the problem of tracking the best expert.
{\ANHTV} achieves this goal without knowing $K$ or any other information,
while the solution provided by \citet{SaniNeLa14} needs to know
$K$ , $L^*_{K\text{-Shift}}$ and $\alpha$ to get the same adversarial bound
and a worse stochastic bound of order $O(1/\alpha^2)$.

\paragraph{Comparison to previous work.}
For adaptive regret, 
the FLH algorithm by \citet{HazanSe07} treats any standard expert algorithm as a sleeping expert, 
and has an additive term $O(\sqrt{t_2}\ln t_2)$ in addition to the base algorithm's regret
(when no prior information is available),
which adds up to a large $O(K\sqrt{T}\ln T)$ term for $K$-shifting regret.
Due to this extra additive regret, 
FLH also does not enjoy first order bounds nor small regret in the stochastic setting,
even if the base algorithm that it builds on provides these guarantees.
On the other hand, FLH was proposed to achieve adaptive regret
for any general online convex optimization problem.
We point out that using {\ANH} as the master algorithm in their framework
will give similar improvements as discussed here.

Adapt-ML-Prod is not directly applicable here for the corresponding sleeping expert problem
since the total number of experts is unknown.

Another well-studied algorithm for this problem is ``fixed share''.
Several works on fixed share for the simpler ``log loss'' setting were studied
before \citep{HerbsterWa98, BousquetWa03, AdamskiyKoChVo12, KoolenAdWa12}.
\citet{CesabianchiGaLuSt12} studied a generalized fixed share algorithm 
for the bounded loss setting considered here.
When $t_2 - t_1$ and $L_{[t_1, t_2], i}$ are known, their algorithm ensures 
$R_{[t_1, t_2], i} = O\(\sqrt{L_{[t_1, t_2], i} \ln (N(t_2-t_1))}\)$ for adaptive regret,
and when $K$, $T$ and $L^*_{K\text{-Shift}}$ are known, 
they have $R_{K\text{-Shift}} = O\(\sqrt{K L^*_{K\text{-Shift}} \ln(NT/K)}\)$.
No better result is provided for the stochastic setting. 
More importantly, when no prior information is known,
which is the case in practice, the best results one can extract from their analysis are
$R_{[t_1, t_2], i} = O(\sqrt{t_2\ln (N t_2)})$ and $R_{K\text{-Shift}} = O(K\sqrt{T\ln (NT)})$,
which are much worse than our results.

\begin{table}[t]
\caption{Comparison of different algorithms for time-varying competitors\footnote{
For fair comparison, we only consider the case when no prior information
(e.g. $t_1, t_2, V(\u_{1:T})$ etc) is known.}}
\label{tab:time_varying}
\vskip 0.15in
\begin{center}
\begin{small}
\def\arraystretch{1}
\begin{tabular}{|c|c|c|c|}
\hline
Algorithm & $R_{[t_1,t_2], i}$ & $R_{K\text{-Shift}}$ & $R(\u_{1:T})$   \\

\hline
\specialcell{\ANHTV \\ (this work)} & 
$\sqrt{\tL_{[t_1, t_2], i} \ln\(N t_1\)}$ & 
$\sqrt{K \tL^*_{K\text{-Shift}} \ln(NT)}$ &
$\sqrt{V(\u_{1:T})\tL(\u_{1:T})\ln \(NT\)}$ \\

\hline
\specialcell{FLH\footnote{The choice of the base algorithm 
does not matter since the dominating term in the regret comes from FLH itself.} \\ 
{\scriptsize\citep{HazanSe07}}} & 
$\sqrt{t_2}\ln t_2$ & 
$K\sqrt{T}\ln T$ & 
unknown \\

\hline
\specialcell{Fixed Share  \\ {\scriptsize\citep{CesabianchiGaLuSt12}}} & 
$\sqrt{t_2\ln (N t_2)}$ & 
$K\sqrt{T\ln (NT)}$ & 
unknown \\

\hline
\end{tabular}
\end{small}
\end{center}
\vskip -0.1in
\end{table}

\subsection{General Time-Varying Competitors}
We finally discuss the most general goal: 
compete with different $\u_t$ on different rounds. 
Recall $R(\u_{1:T}) = \sum_{t=1}^T \u_t \cdot \r_t$ where $\u_t \in \mathbb{R}_+^N$ for all $t$
(note that $\u_t$ does not even have to be a distribution).
Clearly, adaptive regret and $K$-shifting regret are 
special cases of this general notion.
Intuitively, how large this regret is should be closely related to 
how much the competitor's sequence $\u_{1:T}$ varies.
\citet{CesabianchiGaLuSt12} introduced a distance measurement to capture this variation:
$V(\u_{1:T}) = \sum_{t=1}^T \sum_{i = 1}^N [u_{t,i} - u_{t-1,i}]_+$
where we define $u_{0,i} = 0$ for all $i$.
Also let $\|\u_{1:T}\| = \sum_{t=1}^T \|\u_t\|_1$ and 
$L(\u_{1:T}) = \sum_{t=1}^T \u_t \cdot \l_t$.
Fixed share is shown to ensure the following regret \citep{CesabianchiGaLuSt12}:
$ R(\u_{1:T}) = O\big(\sqrt{V(\u_{1:T})L(\u_{1:T})\ln \(N\|\u_{1:T}\|/V(\u_{1:T})\)}\big) $
when $V(\u_{1:T}), L(\u_{1:T})$ and $\|\u_{1:T}\|$ are known.
No result was provided otherwise.\footnote{
Although in Section 7.3 of \citet{CesabianchiGaLuSt12}, 
the authors mentioned online tuning technique for the parameters, 
it only works for special cases (e.g. adaptive regret).}
Here, we show that our parameter-free algorithm {\ANHTV} actually 
achieves almost the same bound without knowing any information beforehand.
Moreover, while the results in \citet{CesabianchiGaLuSt12} 
are specific for the fixed share algorithm,
we prove the following results which are independent of the concrete algorithms
and may be of independent interest.

\begin{theorem}\label{thm:time_varying}
Suppose an expert algorithm ensures $R_{[t_1, t_2], i} \leq \sqrt{A\sum_{t = t_1}^{t_2} z_{t,i}}$ 
for any $t_1, t_2$ and $i$, 
where $z_{t,i} \geq 0$ can be anything depending on $t$ and $i$ 
(e.g. $|r_{t,i}|$, $[\ell_{t,i} - \hl_t]_+$, $\ell_{t,i}$ or constant $1$),
and $A$ is a term independent of $t_1, t_2$ and $i$.
Then this algorithm also ensures 
$$ R(\u_{1:T}) \leq \sqrt{A V(\u_{1:T}) \textstyle\sum_{t=1}^T \u_t\cdot\z_t}. $$
Specially, for {\ANHTV}, plugging $A = \O(\ln(NT))$ and $z_{t,i} = [\ell_{t,i} - \hl_t]_+$ gives
$ R(\u_{1:T}) = \O\Big(\sqrt{V(\u_{1:T})\tL(\u_{1:T})\ln \(NT\)}\Big),  $
where $\tL(\u_{1:T}) = \sum_{t=1}^T \sum_{i=1}^N u_{t,i}[\ell_{t,i} - \hl_t]_+$.
\end{theorem}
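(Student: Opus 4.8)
The plan is to reduce the time-varying competitor problem to a sequence of adaptive-regret statements, exactly paralleling how $K$-shifting regret was handled, but now allowing an arbitrary (not piecewise-constant) competitor sequence. The key structural fact is a decomposition of $\sum_t \u_t\cdot\r_t$ into a sum of "block" regrets, each of which is an adaptive regret on an interval, weighted appropriately. Concretely, I would first observe that any nonnegative sequence $\u_{1:T}$ can be written as a nonnegative combination of "interval-indicator" competitors: for each coordinate $i$ and each round $t$, the increment $[u_{t,i}-u_{t-1,i}]_+$ introduces mass on expert $i$ starting at time $t$, and one can track how long that mass persists before being removed. Summing the contributions, $\sum_{t=1}^T \u_t\cdot\r_t$ becomes a sum over "pieces," where each piece is an interval $[t_1,t_2]$ with a coefficient, contributing (coefficient) $\times R_{[t_1,t_2],i}$. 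The total of the coefficients over all pieces starting at a given time is controlled by $V(\u_{1:T})$, since each piece is "born" from a positive increment.

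Next I would apply the hypothesis $R_{[t_1,t_2],i}\le\sqrt{A\sum_{t=t_1}^{t_2} z_{t,i}}$ to each piece. This turns the bound into $\sum_{\text{pieces}} c_{\text{piece}}\sqrt{A\sum_{t\in\text{piece}} z_{t,i}}$. The remaining work is purely an application of Cauchy--Schwarz (or equivalently concavity of $\sqrt{\cdot}$): writing $\sum c_p\sqrt{A s_p} = \sqrt{A}\sum \sqrt{c_p}\cdot\sqrt{c_p s_p} \le \sqrt{A}\sqrt{\sum c_p}\sqrt{\sum c_p s_p}$, I want to identify $\sum_p c_p$ with $V(\u_{1:T})$ and $\sum_p c_p s_p$ with $\sum_{t=1}^T \u_t\cdot\z_t$. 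The first identification should follow directly from the bookkeeping above: the coefficient of each piece is exactly the amount of mass that piece carries, and summing over all pieces counts each positive increment once, giving $V(\u_{1:T})$. The second is the point where one checks that $\sum_p c_p\sum_{t\in\text{piece}_p} z_{t,i_p} = \sum_{t=1}^T\sum_i u_{t,i} z_{t,i} = \sum_t \u_t\cdot\z_t$, because at each round $t$ the total coefficient of all pieces of coordinate $i$ that are "active" at $t$ equals $u_{t,i}$ itself.

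The main obstacle I anticipate is getting the decomposition into pieces exactly right, in particular handling the fact that mass on an expert can be both added and removed over time, so the "pieces" are not simply nested intervals and a naive accounting could double-count or under-count. The clean way to do this is a telescoping/layer-cake argument: think of $u_{t,i}$ as a height over time and slice it into horizontal strips of infinitesimal thickness; each maximal horizontal segment at a given height is one interval-piece, and its "birth" corresponds to a positive increment of $u_{\cdot,i}$ while its "death" corresponds to a drop — but since we only charge births against $V(\u_{1:T})=\sum_t\sum_i[u_{t,i}-u_{t-1,i}]_+$ and $u_{0,i}=0$, every strip is born exactly once and the total strip-mass born equals $V(\u_{1:T})$. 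Once this layer-cake picture is in place, both identifications are immediate and the Cauchy--Schwarz step is routine.

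Finally, for the {\ANHTV} specialization I would simply invoke the adaptive-regret bound already established in Section~\ref{sec:time_varying}, namely $R_{[t_1,t_2],i}=\O(\sqrt{(\sum_{t=t_1}^{t_2}|r_{t,i}|)\ln(Nt_1)})$, together with the first-order refinement from Theorem~\ref{thm:implications} giving $R_{[t_1,t_2],i}=\O(\sqrt{\tL_{[t_1,t_2],i}\ln(Nt_1)})$; taking $A=\O(\ln(NT))$ (upper-bounding $\ln(Nt_1)$ by $\ln(NT)$) and $z_{t,i}=[\ell_{t,i}-\hl_t]_+$ in the general bound, and noting $\sum_t\u_t\cdot\z_t=\tL(\u_{1:T})$ by definition, yields $R(\u_{1:T})=\O(\sqrt{V(\u_{1:T})\tL(\u_{1:T})\ln(NT)})$ as claimed.
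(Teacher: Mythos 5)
Your proposal is correct and takes the same high-level route as the paper: decompose $R(\u_{1:T})=\sum_t\u_t\cdot\r_t$ into a nonnegative-weighted sum of adaptive regrets $R_{[s,e],i}$, apply the interval-regret hypothesis to each piece, then use Cauchy--Schwarz to fold the weighted sum of square roots into $\sqrt{(\sum c_p)\cdot A\sum c_p s_p}$ and identify the two factors with $V(\u_{1:T})$ and $\sum_t\u_t\cdot\z_t$. The one place you diverge is in how you justify the existence of a decomposition whose total coefficient mass is $\sum_{t,i}[u_{t,i}-u_{t-1,i}]_+$. The paper isolates this as Lemma~\ref{lem:construction} and proves it by induction on $T$, in fact establishing the stronger claim that this value is the \emph{minimum} over all valid interval-decompositions. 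You instead propose a layer-cake (horizontal-slicing) construction: slice $u_{\cdot,i}$ into strips, note that each strip at height $h$ starting at time $s$ corresponds to $h\in[u_{s-1,i},u_{s,i})$, so the total strip mass born at $s$ is exactly $[u_{s,i}-u_{s-1,i}]_+$, and the total active mass at round $t$ is exactly $u_{t,i}$. This yields the required identities $\sum_p c_p = V(\u_{1:T})$ and $\sum_p c_p s_p=\sum_t\u_t\cdot\z_t$ directly. Your route is arguably more transparent and proves exactly what is needed (an explicit construction achieving the bound), whereas the paper's Lemma~\ref{lem:construction} additionally certifies optimality of the decomposition, which is mathematically pleasant but not required for the theorem. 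To be fully rigorous you would want to discretize the layer-cake (the heights $h$ at which the maximal-interval structure changes form a finite set, so only finitely many distinct pieces arise), but that is routine. The {\ANHTV} specialization, including using Theorem~\ref{thm:implications} to pass to the first-order bound and bounding $\ln(Nt_1)\le\ln(NT)$ to make $A$ uniform over intervals, matches the paper.
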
 

The key idea of the proof is to rewrite $R(\u_{1:T})$ as a weighted sum of
several adaptive regrets in an optimal way 
(see Appendix \ref{app:time_varying} for the complete proof).
This theorem tells us that while playing with time-varying competitors seems
to be a harder problem,
it is in fact not any harder than its special case: achieving adaptive regret on any interval. 
Although the result is independent of the algorithms,
one still cannot derive bounds on $R(\u_{1:T})$ for FLH or fixed share 
based on their adaptive regret bounds,
because when no prior information is available, 
the bounds on $R_{[t_1, t_2], i}$ for these algorithms are of order $O(\sqrt{t_2})$
instead of $O(\sqrt{t_2-t_1})$, which is not good enough.
We refer the reader to Table \ref{tab:time_varying} for a summary of this section.

\section{Competing with the Best Pruning Tree}\label{sec:pruning_tree}
We now turn to our second application on predicting almost as well as 
the best pruning tree within a template tree.
This problem was studied in the context of online learning by \citet{HelmboldSc97}
using the approach of \citet{WillemsShTj93, WillemsShTj95}.
It is also called the tree expert problem in \citet{CesabianchiLu06}.
\citet{FreundScSiWa97} proposed a generic reduction from a tree expert problem
to a sleeping expert problem. 
Using this reduction with our new algorithm, 
we provide much better results compared to previous work
(summarized in Table \ref{tab:tree_expert}).

Specifically, consider a setting where on each round $t$,
the predictor has to make a decision $y_t$ from some convex set $\Y$
given some side information $x_t$,
and then a convex loss function $f_t : \Y \rightarrow [0,1]$ is revealed
and the player suffers loss $f_t(y_t)$.
The predictor is given a {\it template tree} $\G$ to consult.
Starting from the root, 
each node of $\G$ performs a test on $x_t$ to decide 
which of its child should perform the next test,
until a leaf is reached.
In addition to a test, each node (except the root) also makes a prediction based on $x_t$.
A {\it pruning tree} $\P$ is a tree induced by replacing 
zero or more nodes (and associated subtrees) of $\G$ by leaves. 
The prediction of a pruning tree $\P$ given $x_t$, denoted by $\P(x_t)$, is naturally defined as
the prediction of the leaf that $x_t$ reaches by traversing $\P$. 
The player's goal is thus to predict almost as well as the best pruning tree in hindsight,
that is, to minimize $R_{\G} = \sum_{t=1}^T f_t(y_t) - \min_{\P}\sum_{t=1}^T f_t(\P(x_t))$.

The idea of the reduction introduced by \citet{FreundScSiWa97} 
is to view each edge of $\G$ as a sleeping expert (indexed by $e$),
who is awake only when traversed by $x_t$, and in that case
predicts $y_{t,e}$, the same prediction as the child node that it connects to.
The predictor runs a sleeping expert algorithm with loss $\ell_{t, e} = f_t(y_{t,e})$,
and eventually predicts $y_t = \sum_{e \in E} p_{t,e} y_{t,e}$
where $E$ denotes the set of edges of $\G$
and $p_{t,e} \;(e \in E)$ is the output of the expert algorithm;
thus by convexity of $f_t$, we have 
$f_t(y_t) \leq \sum_{e \in E} p_{t,e}f_t(y_{t,e}) = \hl_t$.
Note that we only care about the predictions of those awake experts
since otherwise $p_{t,e}$ is required to be zero.
Now let $\P^*$ be one of the best pruning trees, that is, 
$\P^* \in \arg\min_{\P}\sum_{t=1}^T f_t(\P(x_t))$,
and $m$ be the number of leaves of $\P^*$.
In the expert problem, we will set the competitor $\u$ to be a uniform distribution 
over the $m$ terminal edges (that is, the ones connecting the leaves) of $\P^*$,
and the prior $\q$ to be a uniform distribution over all the edges.
Since on each round, one and only one of those $m$ experts is awake,
and its prediction is exactly $\P^*(x_t)$,
we have $ R(\u) = \sum_{t=1}^T \frac{1}{m} (\hl_t - f_t(\P^*(x_t)))$,
and therefore $R_{\G} \leq m R(\u)$.

It remains to pick a concrete sleeping algorithm to apply.
There are two reasons that make {\ANH} very suitable for this problem.
First, since $m$ is clearly unknown ahead of time,
we are competing with an unknown competitor $\u$, 
which is exactly what {\ANH} can deal with.
Second, the number of awake experts is dynamically changing,
and as discussed before, in this case
{\ANH} enjoys a regret bound that is adaptive in the the number of experts seen so far.
Formally, recall the notation $N_t$,
which in this case represents the total number of distinct traversed edges up to round $t$.
Then by Theorem \ref{thm:confidence_regret}, 
we have 
$$ R_{\G} = \O(m \sqrt{(\u \cdot \C_T) \RE(\u \;||\; \q)}) 
=  \O\bigg(\sqrt{m \(\textstyle\sum_{t=1}^T |\hl_t - f_t(\P^*(x_t))|\) \ln \(\tfrac{N_T}{m}\) }\bigg), $$
which, by Theorem \ref{thm:implications}, implies 
$ R_{\G} = \O\Big(\sqrt{m \tL^*\ln (N_T/m)} \Big)$ where 
$\tL^* = \sum_{t=1}^T [f_t(\P^*(x_t)) - \hl_t]_+$,
which is at most the total loss of the best pruning tree $L^* = \sum_{t=1}^T f_t(\P^*(x_t))$.
Moreover, the algorithm is efficient: 
the overall space requirement is $O(N_T)$, 
and the running time on round $t$ is $O(\|x_t\|_{\G})$ 
where we use $\|x_t\|_{\G}$ to denote the number of edges that $x_t$ traverses.

\begin{table}[t]
\caption{Comparison of different algorithms for the tree expert problem}
\label{tab:tree_expert}
\vskip 0.05in
\begin{center}
\begin{small}
\begin{tabular}{|c|c|c|c|c|}
\hline
Algorithm & $R_{\G}$ & Time (per round) & Space (overall) &
\specialcell{Need to know \\$L^*$ and $m$?}  \\

\hline
\specialcell{\ANH \\ (this work)} & 
$\O\Big(\sqrt{m \tL^*\ln \(\tfrac{N_T}{m}\)}\Big)$ & 
$O\(\|x_t\|_{\G}\)$ & $O\(N_T\)$ & No \\

\hline
\specialcell{Adapt-ML-Prod \\ {\scriptsize\citep{GaillardStEr14}}} & 
$\O\big(\sqrt{m \tL^*\ln N}\big)$ & 
$O\(\|x_t\|_{\G}\)$ & $O\(N_T\)$ & No \\

\hline
\specialcell{Exponential Weights  \\ {\scriptsize\citep{HelmboldSc97}}} & 
$O\big(\sqrt{m L^*}\big)$ & 
$O\(d\|x_t\|_{\G}\)$ & $O\(N_T\)$ & Yes \\

\hline
\end{tabular}
\end{small}
\end{center}
\vskip -0.1in
\end{table}

\paragraph{Comparison to other solutions.}
The work by \citet{FreundScSiWa97} considers
a variant of the exponential weights algorithm in a ``log loss'' setting,
and is not directly applicable here
(specifically it is not clear how to tune the learning rate appropriately).
A better choice is the Adapt-ML-Prod algorithm by \citet{GaillardStEr14} 
(the version for the sleeping expert problem). 
However, there is still one issue for this algorithm:
it does not give a bound in terms of $\RE(\u \;||\; \q)$ for an unknown $\u$.\footnote{
In fact, even if it does, this term is still dominated by a $\ln N$ term.
See the discussion at the end of Section A.3 of \citet{GaillardStEr14}
that we already mentioned at Section \ref{sec:ANH}.}
So to get a bound on $R(\u)$, the best thing to do is to use the definition $R(\u) = \u\cdot \R_T$
and a bound on each $R_{T,i}$.
In short, one can verify that Adapt-ML-Prod ensures regret 
$R(\u) = \O\(\sqrt{m \tL^*\ln N} \)$ where $N = |E|$ is the total number of edges/experts.
We emphasize that $N$ can be much larger than $N_T$ when the tree is huge.
Indeed, while $N_T$ is at most $T$ times the depth of $\G$, 
$N$ could be exponentially large in the depth.
The running time and space of Adapt-ML-Prod for this problem, however, is the same as {\ANH}.

We finally compare with a totally different approach \citep{HelmboldSc97},
where one simply treats each pruning tree as an expert and run the exponential weights algorithm.
Clearly the number of experts is exponentially large, 
and thus the running time and space are unacceptable by a naive implementation.
This issue is avoided by using a clever dynamic programming technique.
If $L^* $ and $m$ are known ahead of time, 
then the regret for this algorithm is $O(\sqrt{m L^*})$ by tuning the learning rate
optimally. 
As discussed in \citet{FreundScSiWa97}, the linear dependence on $m$ in this bound
is much better than the one of the form  $O\(\sqrt{m L^*\ln N} \)$,
which, in the worst case, is linear in $m$.
This was considered as the main drawback of using the sleeping expert approach. 
However, the bound for {\ANH} is $\O\Big(\sqrt{m \tL^*\ln (N_T/m)} \Big)$,
which is much smaller as discussed previously and in fact comparable to $O(\sqrt{m L^*})$.
More importantly, $L^*$ and $m$ are unknown in practice.
In this case, no sublinear regret is known for this dynamic programming approach, since it
relies heavily on the fact that the algorithm is using a fixed learning rate
and thus the usual time-varying learning rate methods cannot be applied here. 
Therefore, although theoretically this approach gives small regret,
it is not a practical method.
The running time is also slightly worse than the sleeping expert approach.
For simplicity, suppose every internal node has $d$ children.
Then the time complexity per round is $O(d\|x\|_{\G})$.
The overall space requirement is 
$O(N_t)$, the same as other approaches.
Again, see Table \ref{tab:tree_expert} for a summary of this section.

Finally, as mentioned in \citet{FreundScSiWa97}, 
the sleeping expert approach can be easily generalized to predicting with a decision graph.
In that case, {\ANH} still enjoys all the improvements discussed in this section (details omitted).


\newpage
\bibliography{ref}
\bibliographystyle{plainnat}

\newpage
\appendix
\section{Complete proofs of Theorem~\ref{thm:ANH} and \ref{thm:confidence_regret}}\label{app:proof_ANH}

We need the following two lemmas. 
The first one is an improved version of Lemma 2 of \citet{LuoSc14b}.

\begin{lemma}\label{lem:potential}
For any $R \in \mathbb{R}, C \geq 0$ and $r \in [-1, 1]$,
we have $$\Phi(R + r, C + |r|)  \leq \Phi(R, C) + w(R, C)r + \frac{3|r|}{2(C+1)}.$$
\end{lemma}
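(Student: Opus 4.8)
The plan is to reduce the inequality, which quantifies over all $r\in[-1,1]$, to its two endpoints $r=\pm1$ by convexity, to observe that the two endpoint inequalities collapse to a single ``second-difference'' estimate on the potential, and finally to prove that estimate by a short case split on $R$. (The difference from Lemma~2 of \citet{LuoSc14b} is that the first argument of the potential on the left is $C+|r|$ rather than $C+1$; since enlarging $C$ shrinks $\Phi$ when the regret is positive, the new statement is strictly stronger and needs the sharper estimate below.) Throughout I would assume $C>0$, the case $C=0$ being trivial: either $R>0$, so $\Phi(R,0)=+\infty$ and the right-hand side is infinite, or $R\le0$, a direct one-variable check.

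The first step is to show that $r\mapsto\Phi(R+r,\,C+|r|)$ is convex on each of $[0,1]$ and $[-1,0]$ separately (it need not be convex across $r=0$). On $[0,1]$ this function is $e^{\phi(r)}$ with $\phi(r)=[R+r]_+^2/(3(C+r))$; on the region $R+r>0$ the affine substitution $s=C+r$ turns $\phi$ into $\tfrac{s}{3}+\tfrac{2(R-C)}{3}+\tfrac{(R-C)^2}{3s}$, which is convex in $s$ and hence in $r$; on the region $R+r\le0$ one has $\phi\equiv0$; and the two pieces join $C^1$-smoothly because both one-sided derivatives vanish where $R+r=0$. Since $(e^{\phi})''=(\phi''+(\phi')^2)e^{\phi}\ge0$ on each smooth piece, $e^{\phi}$ is convex on $[0,1]$; the interval $[-1,0]$ is symmetric, using $s=C-r$ and $\phi=\tfrac{(R+C-s)^2}{3s}$.

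With convexity in hand, the chord bound gives $\Phi(R+r,C+|r|)\le\Phi(R,C)+r\big(\Phi(R+1,C+1)-\Phi(R,C)\big)$ for $r\in[0,1]$ and $\Phi(R+r,C+|r|)\le\Phi(R,C)-r\big(\Phi(R-1,C+1)-\Phi(R,C)\big)$ for $r\in[-1,0]$. Substituting the definition $w(R,C)=\tfrac12\big(\Phi(R+1,C+1)-\Phi(R-1,C+1)\big)$ into the target, one checks that in both signs of $r$ the desired bound reduces to the single claim
\[ \tfrac12\Phi(R+1,C+1)+\tfrac12\Phi(R-1,C+1)-\Phi(R,C)\ \leq\ \frac{3}{2(C+1)}. \qquad(\star) \]

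It remains to prove $(\star)$, by cases on $R$. If $R\le1$ then $\Phi(R-1,C+1)=1$, $\Phi(R+1,C+1)\le e^{4/(3(C+1))}$, and $\Phi(R,C)\ge1$, so $(\star)$ follows from the convex-versus-affine comparison $e^{4x/3}\le1+3x$ at $x=1/(C+1)\in(0,1]$. If $R\ge1$, I would factor out $\Phi(R,C)=e^{R^2/(3C)}$ and write the remaining two terms as $e^{(R^2+1)/(3(C+1))}\cosh\!\big(\tfrac{2R}{3(C+1)}\big)$; the bound $\cosh t\le e^{t^2/2}$ then gives $\tfrac12\Phi(R+1,C+1)+\tfrac12\Phi(R-1,C+1)\le\Phi(R,C)\,e^{\gamma}$ with $\gamma=\tfrac{1}{3(C+1)}\big(1-\tfrac{R^2(C+3)}{3C(C+1)}\big)$. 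When $\gamma\le0$ the left side of $(\star)$ is already nonpositive; when $\gamma>0$ one has simultaneously $\gamma<\tfrac{1}{3(C+1)}$ and $R^2<\tfrac{3C(C+1)}{C+3}<3C$, so $\Phi(R,C)<e$, whence the left side of $(\star)$ is at most $\Phi(R,C)(e^{\gamma}-1)<\tfrac{e^{4/3}}{3(C+1)}<\tfrac{3}{2(C+1)}$. The convexity computation and the endpoint algebra are routine; the one genuinely delicate point is $(\star)$ for $R\ge1$, where the key observation is that precisely when the $\cosh$ factor is \emph{not} swallowed by the smaller denominator of $\Phi(R,C)$ — i.e.\ when $\gamma>0$ — one automatically has $R^2/(3C)<1$, keeping $\Phi(R,C)$ below $e$, and it is exactly this that makes the residual fit inside $\tfrac{3}{2(C+1)}$ with the constant $3$.
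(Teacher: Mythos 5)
Your proof is correct, and its skeleton is the same as the paper's: establish piecewise convexity of $r \mapsto \Phi(R+r, C+|r|)$ separately on $[-1,0]$ and $[0,1]$ via the substitutions $s = C \pm r$ (the paper's displayed rewritings of the exponent are exactly yours), reduce by the chord bound to the endpoints $r=\pm1$, and observe that both endpoint constraints collapse to the single second-difference estimate
\[
\tfrac12\Phi(R+1,C+1)+\tfrac12\Phi(R-1,C+1)-\Phi(R,C)\ \le\ \tfrac{3}{2(C+1)},
\]
which in the paper's notation is precisely $f(1)-f(0)\le\tfrac{3}{2(C+1)}$.

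The one place you genuinely diverge is in how that estimate is proved. The paper invokes Lemma~2 of \citet{LuoSc14b} as a black box to obtain the intermediate bound $\tfrac12\bigl(\exp\bigl(\tfrac{4}{3(C+1)}\bigr)-1\bigr)$ and then linearizes via $e^x-1\le\tfrac{e^a-1}{a}x$ with $e^{4/3}-1\le3$. You instead give a self-contained proof by splitting on $R$: for $R\le1$ you land on exactly that same exponential comparison; for $R\ge1$ you factor out $\Phi(R,C)$, bound $\cosh t\le e^{t^2/2}$, and exploit the observation that whenever the residual exponent $\gamma$ is positive one automatically has $R^2<3C$, hence $\Phi(R,C)<e$. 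I checked the algebra: $\gamma=\tfrac{1}{3(C+1)}\bigl(1-\tfrac{R^2(C+3)}{3C(C+1)}\bigr)$ is correct, and the chain $\Phi(R,C)(e^\gamma-1)<e\cdot\gamma e^\gamma<\tfrac{e^{4/3}}{3(C+1)}<\tfrac{3}{2(C+1)}$ goes through using $e^\gamma-1\le\gamma e^\gamma$ and $\gamma\le\tfrac13$. So your version makes the lemma independent of the prior work, at the price of a case analysis. Two small points to tighten in a written version: the last inequality chain in the $R\ge1$ branch should be spelled out (it silently uses $e^\gamma-1\le\gamma e^\gamma$ and $\gamma\le1/3$); and the $C=0$, $R\le0$ edge case is dismissed as a ``direct check'' but actually requires bounding $\exp\bigl((R+r)^2/(3r)\bigr)$ for $r>-R$, which deserves a sentence.
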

\begin{proof}
We first argue that $\Phi(R + r, C + |r|)$, as a function of $r$,
is piecewise-convex on $[-1, 0]$ and $[0,1]$.
Since the value of the function is $1$ when $R+r < 0$ and is at least $1$ otherwise.
It suffices to only consider the case when $R+r \geq 0$.
On the interval $[0,1]$, we can rewrite the exponent (ignoring the constant $\frac{1}{3}$) as:
$$ \frac{(R+r)^2}{C+r} = (C+r) + \frac{(R-C)^2}{C+r} + 2(R-C), $$
which is convex in $r$.
Combining with the fact that ``if $g(x)$ is convex then $\exp(g(x))$ is also convex''
proves that $\Phi(R + r, C + |r|)$ is convex on $[0,1]$.
Similarly when $r \in [-1,0]$, rewriting the exponent as
$$ \frac{(R+r)^2}{C-r} = (C-r) + \frac{(R+C)^2}{C-r} - 2(R+C) $$
completes the argument.

Now define function $f(r) = \Phi(R + r, C + |r|) - w(R, C)r$.
Since $f(r)$ is clearly also piecewise-convex on $[-1, 0]$ and $[0,1]$,
we know that the curve of $f(r)$ is below the segment connecting points 
$(-1, f(-1))$ and $(0, f(0))$ on $[-1,0]$,
and also below the segment connecting points $(1, f(1))$ and $(0, f(0))$ on $[0,1]$.
This can be mathematically expressed as:
$$ f(r) \leq \max\{f(0) + (f(0) - f(-1))r, f(0) + (f(1) - f(0))r \} = f(0) + (f(1) - f(0))|r|, $$
where we use the fact $f(-1) = f(1)$. 
Now by Lemma 2 of \citet{LuoSc14b}, we have
$$ f(1) - f(0) = \tfrac{1}{2}\(\Phi(R+1, C+1) + \Phi(R-1, C+1)\) - \Phi(R, C) 
\leq \tfrac{1}{2}\(\exp\(\tfrac{4}{3(C+1)}\) - 1\), $$ 
which is at most $\frac{e^{\frac{4}{3}} - 1}{2(C+1)}$ since $C$ is nonnegative
and $e^x - 1 \leq \frac{e^a-1}{a}x$ for any $x\in[0,a]$.
Noting that $e^{\frac{4}{3}} - 1 \leq 3$ completes the proof.
\end{proof}

The second lemma makes use of Lemma \ref{lem:potential} to show that 
the weighted sum of potentials does not increase much and thus the final potential is relatively small.

\begin{lemma}\label{lem:final_potential}
{\ANH} ensures $ \sum_{i=1}^N q_{i} \Phi(R_{T,i}, C_{T,i}) \leq B
= 1 + \frac{3}{2}\sum_{i=1}^N q_i \(1 + \ln (1 + C_{T,i})\)$.
\end{lemma}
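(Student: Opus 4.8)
The plan is to bound the one-step increase of the weighted potential $\sum_{i} q_i \Phi(R_{t,i}, C_{t,i})$ and then telescope over $t = 1, \ldots, T$. First I would apply Lemma \ref{lem:potential} coordinate-wise with $R = R_{t-1,i}$, $C = C_{t-1,i}$ and $r = r_{t,i} = \hl_t - \ell_{t,i} \in [-1,1]$, obtaining
$$ \Phi(R_{t,i}, C_{t,i}) \leq \Phi(R_{t-1,i}, C_{t-1,i}) + w(R_{t-1,i}, C_{t-1,i}) r_{t,i} + \frac{3|r_{t,i}|}{2(C_{t-1,i}+1)}. $$
Multiplying by $q_i$ and summing over $i$, the key observation is that the linear term vanishes: since $p_{t,i} \propto q_i w(R_{t-1,i}, C_{t-1,i})$ (and $w \geq 0$, so the normalization is valid), we have $\sum_i q_i w(R_{t-1,i}, C_{t-1,i}) r_{t,i} = Z_t \sum_i p_{t,i} r_{t,i} = Z_t \sum_i p_{t,i}(\hl_t - \ell_{t,i}) = Z_t(\hl_t - \p_t \cdot \l_t) = 0$ for the normalizing constant $Z_t \geq 0$. (If all weights are zero this term is trivially zero as well.) Hence
$$ \sum_i q_i \Phi(R_{t,i}, C_{t,i}) \leq \sum_i q_i \Phi(R_{t-1,i}, C_{t-1,i}) + \frac{3}{2}\sum_i q_i \frac{|r_{t,i}|}{C_{t-1,i}+1}. $$

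Next I would telescope this from $t=1$ to $T$, using $\Phi(R_{0,i}, C_{0,i}) = \Phi(0,0) = 1$ so that $\sum_i q_i \Phi(R_{0,i}, C_{0,i}) = 1$:
$$ \sum_i q_i \Phi(R_{T,i}, C_{T,i}) \leq 1 + \frac{3}{2}\sum_i q_i \sum_{t=1}^T \frac{|r_{t,i}|}{C_{t-1,i}+1}. $$
It then remains to show, for each fixed $i$, that $\sum_{t=1}^T \frac{|r_{t,i}|}{C_{t-1,i}+1} \leq 1 + \ln(1 + C_{T,i})$. Writing $a_t = |r_{t,i}| \in [0,1]$ and $C_{t,i} = \sum_{\tau \le t} a_\tau$, each summand is $\frac{a_t}{1 + C_{t-1,i}}$; I would compare this to the integral $\int_{1+C_{t-1,i}}^{1+C_{t,i}} \frac{dx}{x} = \ln\frac{1+C_{t,i}}{1+C_{t-1,i}}$. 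Since $a_t \le 1$ the denominator $1+C_{t-1,i}$ is within the integration interval, but the summand can exceed the integral by a bounded amount; the cleanest route is the standard fact that $\frac{a_t}{1+C_{t-1,i}} \le \ln\frac{1+C_{t,i}}{1+C_{t-1,i}} + (\text{a term that telescopes})$, or more simply to peel off the first term (bounded by $1$) and bound the rest by the integral. Summing the logarithmic terms telescopes to $\ln(1 + C_{T,i}) - \ln 1 = \ln(1+C_{T,i})$, giving the claimed $1 + \ln(1+C_{T,i})$. Plugging this back yields exactly $\sum_i q_i \Phi(R_{T,i},C_{T,i}) \le 1 + \frac{3}{2}\sum_i q_i(1 + \ln(1+C_{T,i})) = B$.

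The main obstacle is the per-coordinate inequality $\sum_{t} \frac{a_t}{1+C_{t-1,i}} \leq 1 + \ln(1+C_{T,i})$: one must be a little careful because when $a_t$ is close to $1$ and $C_{t-1,i}$ is small (e.g. $t=1$, where the summand equals $a_1 \le 1$), the term is not dominated by the corresponding log increment alone, which is why the additive $+1$ appears. Handling $t=1$ separately (its contribution is $\le 1$) and bounding each later term $\frac{a_t}{1+C_{t-1,i}}$ by $\ln\frac{1+C_{t,i}}{1+C_{t-1,i}}$ — valid since $\frac{a_t}{1+C_{t-1,i}} \le \frac{a_t}{1+C_{t-1,i}}$ and $\ln(1+x) \ge \frac{x}{1+x}$ applied with $x = \frac{a_t}{1+C_{t-1,i}}$ after noting $1 + \frac{a_t}{1+C_{t-1,i}} = \frac{1+C_{t,i}}{1+C_{t-1,i}}$ — makes the argument go through cleanly. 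Everything else (the cancellation of the linear term, the telescoping, the final arithmetic) is routine.
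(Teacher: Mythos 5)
Your proof matches the paper's structure through the telescoping step: apply Lemma~\ref{lem:potential} coordinate-wise, observe that the linear term vanishes because $\sum_i q_i w(R_{t-1,i},C_{t-1,i})\,r_{t,i}$ is a nonnegative multiple of $\p_t\cdot\r_t = \hl_t - \p_t\cdot\l_t = 0$, and telescope to reduce the lemma to the per-coordinate bound $\sum_{t=1}^T |r_{t,i}|/(1+C_{t-1,i}) \leq 1 + \ln(1+C_{T,i})$. The paper simply cites Lemma~14 of \cite{GaillardStEr14} for this last inequality, whereas you try to prove it directly, and the specific argument you offer does not work.

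Your claimed per-term bound $\frac{a_t}{1+C_{t-1,i}} \leq \ln\frac{1+C_{t,i}}{1+C_{t-1,i}}$ is false. Applying $\ln(1+x)\geq \frac{x}{1+x}$ with $x=\frac{a_t}{1+C_{t-1,i}}$ (so $1+x=\frac{1+C_{t,i}}{1+C_{t-1,i}}$) gives $\ln\frac{1+C_{t,i}}{1+C_{t-1,i}} \geq \frac{a_t}{1+C_{t,i}}$, which has $1+C_{t,i}$ in the denominator, not $1+C_{t-1,i}$ --- and in fact the inequality you want goes the wrong way: since $\frac{1}{1+x}$ is decreasing, $\ln\frac{1+C_{t,i}}{1+C_{t-1,i}}=\int_{C_{t-1,i}}^{C_{t,i}}\frac{dx}{1+x}\leq \frac{a_t}{1+C_{t-1,i}}$, strictly whenever $a_t>0$ (e.g.\ $C_{t-1,i}=0$, $a_t=1$ gives $1$ vs.\ $\ln 2$). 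Peeling off $t=1$ does not help, since $C_{t-1,i}$ can stay $0$ for $t>1$, and the per-term inequality still fails for $C_{t-1,i}>0$. The correct route is the one you mention in passing but do not pursue: write $\frac{a_t}{1+C_{t-1,i}}=\frac{a_t}{1+C_{t,i}}+\frac{a_t^2}{(1+C_{t-1,i})(1+C_{t,i})}$; bound the first piece by $\ln(1+C_{t,i})-\ln(1+C_{t-1,i})$ via the integral, and bound the second, using $a_t\leq 1$, by $\frac{a_t}{(1+C_{t-1,i})(1+C_{t,i})}=\frac{1}{1+C_{t-1,i}}-\frac{1}{1+C_{t,i}}$. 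Both sums telescope, giving $\ln(1+C_{T,i})$ and at most $1$ respectively, which is exactly the bound needed.
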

\begin{proof}
First note that since {\ANH} predicts $p_{t,i} \propto q_i w(R_{t-1,i}, C_{t-1,i})$, we have 
\begin{equation}\label{equ:DG_constraint}
\textstyle\sum_{i=1}^N q_i w(R_{t-1,i}, C_{t-1,i}) r_{t,i} = 0. 
\end{equation}
Now applying Lemma \ref{lem:potential} with $R = R_{t-1,i}, C = C_{t-1,i}$ and $r = r_{t,i}$,
multiplying the inequality by $q_i$ on both sides and summing over $i$ gives
$ \sum_{i=1}^N q_{i} \Phi(R_{t,i}, C_{t,i}) \leq \sum_{i=1}^N q_{i} \Phi(R_{t-1,i}, C_{t-1,i}) 
+ \frac{3}{2}\sum_{i=1}^N  \frac{q_{i}|r_{t,i}|}{C_{t-1,i}+1} .$
We then sum over $t \in [T]$ and telescope to show
$  \sum_{i=1}^N q_{i} \Phi(R_{T,i}, C_{T,i}) \leq 1 + 
\frac{3}{2}\sum_{i=1}^N q_{i} \sum_{t=1}^T  \frac{|r_{t,i}|}{C_{t-1,i}+1}. $
Finally applying Lemma 14 of \cite{GaillardStEr14} to show 
$ \sum_{t=1}^T  \frac{|r_{t,i}|}{C_{t-1,i}+1}$ $\leq 1 + \ln(1 + C_{T,i})$
completes the proof.
\end{proof}

We are now ready to prove Theorem \ref{thm:ANH} and Theorem~\ref{thm:confidence_regret}.
\vspace{5pt}
\begin{proof}(of Theorem \ref{thm:ANH})
$\quad$ Assume $q_1 \Phi(R_{T,1}, C_{T,1}) \geq \cdots \geq q_N \Phi(R_{T,N}, C_{T,N})$
without loss of generality.
Then by Lemma \ref{lem:final_potential}, it must be true that
$ q_i \Phi(R_{T,i}, C_{T,i}) \leq \frac{B}{i}$ for all $i$,
which, by solving for $R_{T,i}$, gives $ R_{T,i} \leq \sqrt{3C_{T,i} \ln\(\tfrac{B}{i q_i}\)} $. 
Multiplying both sides by $u_i$, summing over $N$ and applying the Cauchy-Schwarz inequality,
we arrive at
$ R(\u) \leq  \sum_{i=1}^N  \sqrt{3u_iC_{T,i} \cdot u_i\ln\(\tfrac{B}{i q_i}\)} 
\leq \sqrt{3(\u \cdot \C_T) (D(\u \;||\; \q) + \ln B)}, $
where we define $D(\u \;||\; \q) = \sum_{i=1}^N u_i\ln\(\tfrac{1}{i q_i}\)$.
It remains to show that $D(\u \;||\; \q)$ and $\RE(\u \;||\; \q)$ are close.
Indeed, we have $D(\u \;||\; \q) - \RE(\u \;||\; \q) = \sum_{i=1}^N u_i\ln\(\tfrac{1}{i u_i}\)$,
which, by standard analysis, can be shown to reach its maximum when $u_i \propto \frac{1}{i}$
and the maximum value is $\ln \sum_i \frac{1}{i} \leq \ln(1 + \ln N)$.
This completes the proof for Eq. \eqref{equ:regret}.

Finally, when $\u$ is in the special form as described in Theorem \ref{thm:ANH},
we have $D(\u \;||\; \q) - \RE(\u \;||\; \q) = \ln |S| + \frac{1}{|S|}\sum_{i\in S} \ln\(\tfrac{1}{i}\)
\leq \ln |S| - \frac{1}{|S|}\ln (|S|!)$.
By Stirling's formula $x! \geq \sqrt{2\pi x}\(\frac{x}{e}\)^x$, we arrive at
$D(\u \;||\; \q) - \RE(\u \;||\; \q) \leq 1 - (\ln\sqrt{2\pi|S|})/|S| \leq 1$,
proving Eq. \eqref{equ:improved_regret}.
\end{proof}

\begin{proof}(of Theorem \ref{thm:confidence_regret})
It suffices to point out that $r_{t,i}$ is still in the interval $[-1,1]$ and 
Eq. \eqref{equ:DG_constraint} in the proof of Lemma \ref{lem:final_potential} 
still holds by the new prediction rule Eq. \eqref{equ:general_ANH}.
The entire proof for Theorem \ref{thm:ANH} applies here exactly.
\end{proof}

The algorithm and the proof can be generalized to $C_{t,i} = \sum_{\tau=1}^t |r_{\tau,i}|^d$
for any $d \in [0,1]$.
Indeed, the only extra work is to prove the convexity of $\Phi(R+r, C+|r|^d)$.
When $d = 0$, we recover NormalHedge.DT exactly and get a bound on $R(\u)$ for any $\u$ 
(in terms of $\sqrt{T}$),  instead of just $R(\u^*_\epsilon)$ as in the original work.
It is clear that $d = 1$ gives the smallest bound, which is why we use it in {\ANH}.
The ideal choice, however, should be $d = 2$ so that a second order bound 
similar to the one of \citet{GaillardStEr14} can be obtained. 
Unfortunately, the function $\Phi(R+r, C+r^2)$ turns out to not always be piecewise-convex,
which breaks our analysis. 
Whether $d = 2$ gives a low-regret algorithm and how to analyze it
remain an open question. 

\section{Proof of Theorem \ref{thm:implications}}\label{app:implications}
\begin{proof}
For the first result, the key observation is $\u \cdot \C_T = R(\u) + 2\u \cdot \L_T$.
We only consider the case when $R(\u) \geq 0$ since otherwise the statement is trivial.
By the condition we thus have $R(\u)^2 \leq (R(\u) + 2\u \cdot \L_T)A(\u)$,
which by solving for $R(\u)$ gives 
\[ R(\u) \leq \tfrac{1}{2}(A(\u)+\sqrt{A(\u)^2+8(\u \cdot \L_T)A(\u)}) \leq \sqrt{2(\u \cdot \TL_T)A(\u)} + A(\u),\]
proving the bound we want.

For the second result, let $\E_t$ denote the expectation conditioning on all the randomness up to round $t$.
So by the condition, we have $\E_t[r_{t, i^*}] = \sum_{i=1}^N p_{t,i} \E_t[\ell_{t,i}-\ell_{t,i^*}] \geq \alpha(1-p_{t,i^*})$,
and thus $\E[R_{T, i^*}] \geq \alpha S$ where we define $S = \sum_{t=1}^T \E[1-p_{t,i^*}]$.
On the other hand, by convexity we also have $|r_{t, i^*}| \leq \sum_{i=1}^N p_{t,i}|\ell_{t,i} - \ell_{t,i^*}| \leq 1-p_{t,i^*}$
and thus $\E[R_{T, i^*}] \leq \E[\sqrt{A(\e_{i^*})C_{T, i^*}}] \leq \sqrt{A(\e_{i^*}) S}$ 
by the concavity of the square root function.
Combining the above two statements gives $S \leq \frac{A(\e_{i^*})}{\alpha^2}$,
and plugging this back shows $ \E[R_{T, i^*}] \leq  \frac{A(\e_{i^*})}{\alpha}$.
The high probability statement follows from the exact same argument of \citet{GaillardStEr14}
using a martingale concentration lemma.
\end{proof}


\section{Proof of Theorem \ref{thm:time_varying}}\label{app:time_varying}
Below we use $[s, t]$ to denote the set $\{s, s+1, \ldots, t-1, t\}$ for $1 \leq s \leq t \leq T$ and $s, t \in \mathbb{N}^+$.
\begin{proof}
We first fix an expert $i$ and consider the regret to this expert $\sum_{t=1}^T u_{t,i} r_{t,i}$. 
Let $a_j \geq 0$ and  $U_j = [s_j, t_j]$ for $j=1,\ldots, M$ be $M$ positive numbers and $M$
corresponding time intervals such that $u_{t,i} = \sum_{j=1}^M a_j \1\{t \in U_j\}$.
Note that this is always possible, with a trivial choice being $a_j = u_{t,j}$, $s_j = t_j = j$ and $M=T$;
we will however need a more sophisticated construction specified later.
By the adaptive regret guarantee, we have
\[ \sum_{t=1}^T u_{t,i} r_{t,i} = \sum_{j=1}^M a_j \sum_{t=1}^{T} \1\{t \in U_j\} r_{t,i}  
= \sum_{j=1}^M a_j R_{[s_j, t_j],i}  \leq \sum_{j=1}^M a_j \sqrt{A\sum_{t = s_j}^{t_j} z_{t,i}}, \]
which, by the Cauchy-Schwarz inequality, is at most
\[ \sqrt{\sum_{j=1}^M a_j} \cdot \sqrt{A \sum_{j=1}^M a_j \sum_{t = s_j}^{t_j} z_{t,i}} = 
\sqrt{\sum_{j=1}^M a_j} \cdot \sqrt{A \sum_{j=1}^M a_j \sum_{t=1}^{T} \1\{t \in U_j\} z_{t,i} } = 
\sqrt{\sum_{j=1}^M a_j} \cdot \sqrt{A \sum_{t=1}^T u_{t,i} z_{t,i}}. \]
Therefore, we need a construction of $a_j$ and $U_j$ such that $\sum_{j=1}^M a_j$ is minimized.
This is addressed in Lemma~\ref{lem:construction} below
which shows that there is in fact always a (optimal) construction such that 
$\sum_{j=1}^M a_j$ is exactly $\sum_{t=1}^T [u_{t,i} - u_{t-1,i}]_+$.
Now summing the resulting bound over all experts and applying the Cauchy-Schwarz inequality again proves the theorem.
\end{proof}

\begin{lemma}\label{lem:construction}
Let $v_1, \ldots, v_T$ be $T$ nonnegative numbers and $h(\{v_1, \ldots, v_T\}) = \min \sum_{j=1}^M a_j$
where the minimum is taken over the set of all possible choices of $M \in \mathbb{N}^+, a_j > 0, 
U_j = [s_j, t_j]$ with $1 \leq s_j \leq t_j \leq T, s_j, t_j \in \mathbb{N}^+ 
\;(j=1,\ldots, M)$ such that $v_t = \sum_{j=1}^M a_j \1\{t \in U_j\}$ for all $t$.
Then with $v_0$ defined to be $0$ we have
\[ h(\{v_1, \ldots, v_T\}) = \sum_{t=1}^T [v_{t} - v_{t-1}]_+. \]
\end{lemma}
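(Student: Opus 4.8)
The plan is to prove the identity by establishing the two inequalities $h(\{v_1,\ldots,v_T\}) \le \sum_{t=1}^T [v_t - v_{t-1}]_+$ and $h(\{v_1,\ldots,v_T\}) \ge \sum_{t=1}^T [v_t - v_{t-1}]_+$ separately. For the upper bound, I would construct an explicit decomposition of $v_{1:T}$ into intervals whose coefficients sum to exactly $\sum_t [v_t - v_{t-1}]_+$. The natural way to do this is to ``sweep'' through time and, whenever the sequence increases from $v_{t-1}$ to $v_t$ (i.e. $v_t > v_{t-1}$), open a new interval starting at $t$ with weight equal to the increment $v_t - v_{t-1}$; whenever it decreases, close off enough of the currently open intervals (in a last-opened-first-closed, i.e. stack, order) so that the total weight of still-open intervals drops to $v_t$. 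One checks by induction on $t$ that after processing round $t$ the weights of the intervals currently covering $t$ sum to exactly $v_t$, so the resulting family $\{(a_j, U_j)\}$ is a valid decomposition; and the total weight $\sum_j a_j$ equals the sum of all the increments we ever introduced, which is precisely $\sum_{t=1}^T [v_t - v_{t-1}]_+$ (using $v_0 = 0$ to account for the first interval).

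For the matching lower bound, I would argue that \emph{every} valid decomposition $v_t = \sum_j a_j \1\{t \in U_j\}$ must satisfy $\sum_j a_j \ge \sum_{t=1}^T [v_t - v_{t-1}]_+$. The key observation is that for each $t$ with $v_t > v_{t-1}$, at least one interval $U_j$ must have its left endpoint $s_j$ equal to $t$, and more quantitatively: if $P_t = \{ j : s_j = t\}$ denotes the set of intervals that start exactly at time $t$, then $\sum_{j \in P_t} a_j \ge v_t - v_{t-1}$, because the intervals covering $t$ but not $t-1$ must make up the increment $v_t - v_{t-1} = [v_t - v_{t-1}]_+$ (intervals cannot be removed and re-added at a single point — once a contiguous interval $[s_j,t_j]$ is chosen, it covers $t$ iff $s_j \le t \le t_j$). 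Since each interval starts at exactly one time, the sets $P_t$ are disjoint, so $\sum_j a_j \ge \sum_{t} \sum_{j \in P_t} a_j \ge \sum_{t : v_t > v_{t-1}} (v_t - v_{t-1}) = \sum_{t=1}^T [v_t - v_{t-1}]_+$, again using the convention $v_0 = 0$ so that $t$ with $v_t > 0 = v_{t-1}$ is handled.

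The main obstacle I expect is making the upper-bound construction fully rigorous: one has to verify carefully that when the sequence decreases, there is always ``enough'' open weight to close (this follows since the open weight equals $v_{t-1} \ge 0$ and we only need to close down to $v_t \ge 0$), and one has to handle the bookkeeping of partially-closing an interval — which is fine because we are free to split one conceptual interval into several with the same left endpoint and smaller weights, or more simply maintain the open intervals as a stack of (start time, weight) pairs and peel weight off the top. Once the invariant ``total open weight after round $t$ equals $v_t$'' is set up, both the validity of the decomposition and the accounting of $\sum_j a_j$ follow by a clean induction, and the lower bound is a short disjointness argument. I would present the upper bound construction first, then the lower bound, and conclude that $h$ equals the common value.
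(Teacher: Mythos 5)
Your proposal is correct, and it takes a genuinely different route from the paper. The paper proves the lemma by induction on $T$: it picks a global minimizer $t^* \in \arg\min_t v_t$, argues that in an optimal decomposition every interval containing $t^*$ can be ``extended'' to all of $[1,T]$ without increasing the objective, and thereby splits the problem into the recursion $h(\{v_{1:T}\}) = v_{t^*} + h(\{v_{1:t^*-1} - v_{t^*}\}) + h(\{v_{t^*+1:T} - v_{t^*}\})$, which the inductive hypothesis then resolves. Your approach instead proves the two inequalities directly: the upper bound by an explicit stack-based sweep decomposition whose total weight is exactly $\sum_t [v_t - v_{t-1}]_+$, and the lower bound by the clean observation that in any valid decomposition, the intervals with left endpoint exactly $t$ must contribute total weight at least $[v_t - v_{t-1}]_+$ (an interval $[s_j,t_j]$ covers $t$ but not $t-1$ iff $s_j = t$, and $v_t - v_{t-1}$ equals newly-started weight minus just-ended weight), and that these ``starting'' sets $P_t$ are disjoint so their contributions add. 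Your argument is shorter and more elementary than the paper's — no induction, no reasoning about the structure of optimal solutions — and as a bonus it produces the optimal decomposition constructively. The one place you flagged as an expected obstacle (enough open weight to close when the sequence decreases) is indeed routine since the open-weight invariant equals $v_{t-1} \ge v_t \ge 0$. Both proofs establish the same identity; yours is the more direct and, I would say, the cleaner of the two.
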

\begin{proof}
We prove the lemma by induction on $T$. 
The base case $T=1$ is trivial. 
Now assume the lemma is true for any number of $v$'s smaller than $T$.
Suppose $a_j, U_j = [s_j, t_j] \;(j=1,\ldots, M)$ is an optimal construction.
Let $t^* \in \arg\min_t v_t$.
Without loss of generality assume $t^*$ belongs and only belongs to $U_1, \ldots, U_k$ for some $k$. 
By the definition of $h$, we must have
\[ h(\{v_1, \ldots, v_T\}) = v_{t^*} + h(\{v_1 - w_1, \ldots, v_{t^*-1} - w_{t^*-1}\}) 
+ h(\{v_{t^*+1} - w_{t^*+1}, \ldots, v_T - w_T\}), \]
where we define $w_t = \sum_{j=1}^k a_j \1\{t \in U_j\}$ and $h(\emptyset) = 0$ .
(Note that we also use the fact that $v_{t^*} = \min_t v_t$ so that $v_t - w_t \geq v_t - v_{t^*}$ is always nonnegative as required.)
Now the key idea is to show that ``extending'' each $U_j \;(j\in [k])$ to the entire time interval $[1,T]$ 
does not increase the objective value in some sense.
First consider extending $U_1$. By the inductive assumption, we have
\begin{align*}
&h(\{v_1 - w_1 - a_1, \ldots, v_{s_1-1} - w_{s_1-1} - a_1, v_{s_1} - w_{s_1}, \ldots, v_{t^*-1} - w_{t^*-1}\} \\
=\;& (v_1 - w_1 - a_1) + \(\sum_{t=2}^{s_1 - 1} [(v_{t}-w_{t} - a_1) - (v_{t-1}-w_{t-1} - a_1)]_+\) \\
\quad& + [(v_{s_1} - w_{s_1}) - (v_{s_1-1} - w_{s_1-1} - a_1)]_+ + \(\sum_{t=s_1+1}^{T} [(v_{t}-w_{t}) - (v_{t-1}-w_{t-1})]_+\)  \\
=\;& h(\{v_1 - w_1, \ldots, v_{t^*-1} - w_{t^*-1}\}) + [(v_{s_1} - w_{s_1}) - (v_{s_1-1} - w_{s_1-1} - a_1)]_+ \\
\quad& - ([(v_{s_1} - w_{s_1}) - (v_{s_1-1} - w_{s_1-1})]_+ + a_1) \\
\leq\;& h(\{v_1 - w_1, \ldots, v_{t^*-1} - w_{t^*-1}\}),
\end{align*}
where the inequality follows from the fact $[b+c]_+ \leq [b]_+ + c$.
Similarly, we also have
\begin{align*}
&h(\{v_{t^*+1} - w_{t^*+1}, \ldots, v_{t_1} - w_{t_1} , v_{t_1+1} - w_{t_1+1} - a_1, \ldots, v_T - w_T - a_1\} \\
\leq\;& h(\{v_{t^*+1} - w_{t^*+1}, \ldots, v_T - w_T\}).
\end{align*}
By extending $U_2, \ldots, U_k$ one by one in a similar way, we arrive at
\[ h(\{v_1, \ldots, v_T\}) \geq v_{t^*} + h(\{v_1 - v_{t^*}, \ldots, v_{t^*-1} - v_{t^*}\}) 
+ h(\{v_{t^*+1} - v_{t^*}, \ldots, v_T - v_{t^*}\}). \]
Notice that the right hand side of the above inequality admits a valid construction for $\{v_1, \ldots, v_T\}$:
an interval $U = [1, T]$ with weight $a = v_{t^*}$, together with the optimal constructions for
$\{v_1 - v_{t^*}, \ldots, v_{t^*-1} - v_{t^*}\}$ and $\{v_{t^*+1} - v_{t^*}, \ldots, v_T - v_{t^*}\}$.
Therefore, by the optimality of $h$, the above inequality must be an equality.
By using the inductive assumption again, we thus have
\begin{align*}
h(\{v_1, \ldots, v_T\}) &= v_{t^*} + h(\{v_1 - v_{t^*}, \ldots, v_{t^*-1} - v_{t^*}\}) 
+ h(\{v_{t^*+1} - v_{t^*}, \ldots, v_T - v_{t^*}\}) \\
&= v_{t^*} + \(v_1 - v_{t^*} + \sum_{t=2}^{t^*-1} [v_t - v_{t-1}]_+ \) + \(v_{t^*+1} - v_{t^*} + \sum_{t=t^*+2}^T [v_t - v_{t-1}]_+\) \\
&= \(\sum_{t=1}^{t^*-1} [v_t - v_{t-1}]_+\) + \(\sum_{t=t^*+1}^T [v_t - v_{t-1}]_+\) \\
&= \sum_{t=1}^T [v_{t} - v_{t-1}]_+,
\end{align*}
where the last step follows from $[v_{t^*} - v_{t^*-1} ]_+ = 0$ by the definition of $t^*$.
This completes the proof.
\end{proof}

\end{document}